\documentclass[10pt,reqno]{amsart}

\usepackage[T1]{fontenc}
\usepackage[utf8]{inputenc}
\usepackage{microtype}
\usepackage{mathtools,amssymb,amsthm,mathrsfs,bm}
\usepackage{enumitem}
\usepackage{csquotes}
\usepackage[colorlinks=true,linkcolor=blue,citecolor=blue,urlcolor=blue]{hyperref}
\usepackage[
  backend=biber,
  style=numeric,
  sorting=nyt,
  maxbibnames=99
]{biblatex}
\usepackage{aliascnt}
\usepackage{silence}
\allowdisplaybreaks
\numberwithin{equation}{section}

\theoremstyle{plain}
\newtheorem{theorem}{Theorem}[section]
\newaliascnt{proposition}{theorem}
\newtheorem{proposition}[proposition]{Proposition}
\aliascntresetthe{proposition}
\newaliascnt{lemma}{theorem}
\newtheorem{lemma}[lemma]{Lemma}
\aliascntresetthe{lemma}
\newaliascnt{corollary}{theorem}
\newtheorem{corollary}[corollary]{Corollary}
\aliascntresetthe{corollary}

\theoremstyle{definition}
\newaliascnt{definition}{theorem}

\aliascntresetthe{definition}
\newaliascnt{remark}{theorem}
\newtheorem{remark}[remark]{Remark}
\aliascntresetthe{remark}
\newaliascnt{example}{theorem}

\aliascntresetthe{example}

\usepackage[nameinlink,noabbrev]{cleveref}

\crefname{theorem}{theorem}{theorems}
\Crefname{theorem}{Theorem}{Theorems}
\crefname{proposition}{proposition}{propositions}
\Crefname{proposition}{Proposition}{Propositions}
\crefname{lemma}{lemma}{lemmas}
\Crefname{lemma}{Lemma}{Lemmas}
\crefname{corollary}{corollary}{corollaries}
\Crefname{corollary}{Corollary}{Corollaries}
\crefname{definition}{definition}{definitions}
\Crefname{definition}{Definition}{Definitions}
\crefname{remark}{remark}{remarks}
\Crefname{remark}{Remark}{Remarks}
\crefname{example}{example}{examples}
\Crefname{example}{Example}{Examples}
\Crefname{section}{Section}{Sections}
\Crefname{equation}{equation}{equations}

\newcommand{\R}{\mathbb{R}}
\newcommand{\C}{\mathbb{C}}
\newcommand{\HH}{\mathbb{H}}
\newcommand{\F}{\mathbb{F}}
\newcommand{\Id}{I}
\newcommand{\tr}{\operatorname{Tr}}
\newcommand{\re}{\operatorname{Re}}
\newcommand{\diag}{\operatorname{diag}}
\newcommand{\vol}{\operatorname{vol}}
\newcommand{\van}{\operatorname{van}}

\newcommand{\GL}{GL}
\newcommand{\U}{U}
\newcommand{\Orth}{O}
\newcommand{\Sp}{Sp}
\newcommand{\Md}[1]{{M}_{#1}}
\newcommand{\dd}{{d}}
\newcommand{\eps}{\varepsilon}

\newcommand{\set}[1]{\left\{#1\right\}}
\newcommand{\abs}[1]{\left|#1\right|}

\newcommand{\inner}[2]{\left\langle #1,#2\right\rangle}

\newcommand{\OO}{\mathcal{O}}
\newcommand{\MM}{\mathcal{M}}
\newcommand{\Fib}{\mathcal{F}}

\newcommand{\yy}{\mathfrak{y}}

\usepackage[mathlines]{lineno}
\modulolinenumbers[1] 

\title[Entropy Formula for Real, Complex, and Quaternionic DLNs]{On the Entropy Formula for Real, Complex, and Quaternionic Deep Linear Networks}

\author{Luis Contreras}
\address{Departamento de Matem\'aticas, CINVESTAV-IPN, Av. Instituto Polit\'ecnico Nacional 2508, Col. San Pedro Zacatenco, 07360 Ciudad de M\'exico, M\'exico}
\email{lcontrerasm@math.cinvestav.mx}

\author{Marco Nahas}
\address{Division of Applied Mathematics, Brown University, Providence, RI 02912, USA}
\email{marco\_nahas@brown.edu}

\author{Tejas Kotwal}
\address{Division of Applied Mathematics, Brown University, Providence, RI 02912, USA}
\email{tejas\_suresh\_kotwal@alumni.brown.edu}

\subjclass[2020]{68T07, 15A18, 15A23, 15B33, 15B52}

\keywords{Deep linear network, Random matrix theory, Boltzmann entropy}

\thanks{This work was partially supported by NSF grant DMS 2407055.}

\begin{document}

\begin{abstract}
We extend the entropy formula of Menon and Yu for the real Deep Linear
Network (DLN) to its complex and quaternionic analogues, obtaining a unified
formula for DLNs over \(\R\), \(\C\), and \(\HH\).
\end{abstract}

\maketitle

\section{Overview}

In this paper, we compute the Boltzmann entropy of functions represented by
deep linear networks (DLNs) over $\mathbb R$, $\mathbb C$, and $\mathbb H$.
The DLN is a basic model for the geometry of overparametrization in deep
learning, with a precise geometric and thermodynamic description of its
training dynamics \cite{MenonGeometry2024}.

Given depth \(N\ge 2\) and width \(d\ge 1\), a DLN over \(\F\)
is described by the multiplication map
\[
\phi:\Md{d}(\F)^N\longrightarrow \Md{d}(\F),\qquad
\phi(W_N,\ldots,W_1)=W_N\cdots W_1 .
\]
The space of parameters is \(\Md{d}(\F)^N\) while the space of observables
is \(\Md{d}(\F)\). For a parameter \(\mathbf{W}\), we write
\[
\mathbf{W}=(W_N,\ldots,W_1)\in \Md{d}(\F)^N,\qquad
X=\phi(\mathbf{W})=W_N\cdots W_1,
\]
and call \(X\) the end-to-end matrix of \(\mathbf{W}\). Here
\(\F\in\{\R,\C,\HH\}\), where \(\R\), \(\C\), and \(\HH\) denote
the real numbers, complex numbers, and quaternions, respectively.
We write \(\Md{d}(\F)\) for the space of \(d\times d\) matrices over \(\F\),
with \(\GL_d(\F)\subset \Md{d}(\F)\) denoting the invertible matrices.
We also set \(\beta:=\dim_{\R}\F\), so \(\beta=1,2,4\) for
\(\F=\R,\C,\HH\), respectively.

Given \(X\in \GL_d(\F)\), we consider the set of
\emph{balanced factorizations} of \(X\), given by
\[
\OO_X^{\beta}
:=
\left\{
\mathbf{W} \in \GL_d(\F)^N :
W_N\cdots W_1=X,\quad
W_{p+1}^*W_{p+1}=W_pW_p^*
\ \text{for } 1\le p\le N-1
\right\}.
\]
The Boltzmann entropy of \(X\) is the logarithm of the volume,
\[
S^{\beta}(X):=\log\vol(\OO_X^{\beta}),
\]
where the volume is taken with respect to the metric induced from the real
trace pairing on \(\Md{d}(\F)^N\).
Our main result (\Cref{thm:intro-entropy}) is an explicit formula for the entropy, generalizing Menon and Yu's result \cite{MenonYu2025}
from the case $\beta=1$ to the cases $\beta=1, 2, 4$.

\subsection{Main result}

Let \(\Orth_d\), \(\U_d\), and \(\Sp_d\) denote the orthogonal, unitary,
and compact symplectic groups, respectively. Collectively, we write
\(K_1=\Orth_d\), \(K_2=\U_d\), and \(K_4=\Sp_d\), and let
\(c_{\beta}:=\vol(K_{\beta})\), where the volume is taken with respect to
the standard bi-invariant metric on \(K_{\beta}\).

We denote by \(\van(D)\) the Vandermonde determinant of \(D\). 
For a diagonal matrix
\(
D=\diag(\lambda_1,\ldots,\lambda_d),
\)
with real diagonal entries
\(
\lambda_1\ge \lambda_2\ge \cdots \ge \lambda_d,
\)
set
\[
\van(D):=\prod_{1\le s<l\le d}(\lambda_s-\lambda_l).
\]

\begin{theorem}[Entropy formula]\label{thm:intro-entropy}
Let \(X\in\GL_d(\F)\) have distinct singular values
\(
\sigma_1>\cdots>\sigma_d>0,
\)
and write \(\Sigma=\diag(\sigma_1,\ldots,\sigma_d)\). Then
\begin{equation}\label{eq:entropy-formula-vandermonde}
S^{\beta}(X)
=
(N-1)\log c_{\beta}
+\frac{\beta}{2}\log\frac{\van(\Sigma^2)}{\van(\Sigma^{2/N})}
+(\beta-1)
\left(
\frac d2\log N+
\log\frac{\det\Sigma}{\det(\Sigma^{1/N})}
\right).
\end{equation}
Equivalently,
\begin{equation}\label{eq:entropy-formula-entrywise}
\begin{aligned}
S^{\beta}(X)
={}&
(N-1)\log c_{\beta}
+\frac{\beta}{2}\sum_{1\le s<l\le d}
\log\left(
\frac{\sigma_s^2-\sigma_l^2}
{\sigma_s^{2/N}-\sigma_l^{2/N}}
\right)\\
&+(\beta-1)
\left(
\frac d2\log N+
\left(1-\frac1N\right)\sum_{s=1}^d\log\sigma_s
\right).
\end{aligned}
\end{equation}
\end{theorem}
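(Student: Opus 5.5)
We parametrize $\OO_X^{\beta}$ explicitly and compute its volume via the coarea/area formula. Take a singular value decomposition $X=U\Sigma V^*$ with $U,V\in K_\beta$. The balancedness conditions $W_{p+1}^*W_{p+1}=W_pW_p^*$ force all factors to share the same singular values, and the product identity then forces those to be $\Sigma^{1/N}$. Hence every balanced factorization has the form
\[
W_N=U\Sigma^{1/N}Q_{N-1}^*,\quad W_p=Q_p\Sigma^{1/N}Q_{p-1}^* \ (2\le p\le N-1),\quad W_1=Q_1\Sigma^{1/N}V^*,
\]
with $Q_1,\ldots,Q_{N-1}\in K_\beta$. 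To prove this, write polar decompositions and use that $W_pW_p^*$ and $W_{p+1}^*W_{p+1}$ are equal positive matrices. Because the $\sigma_s$ are distinct, the residual freedom is exactly the stabilizer $T_\beta$ of $\Sigma$ under simultaneous conjugation: diagonal matrices with entries in the unit sphere of $\F$, namely $\{\pm1\}$, $U_1$, or $Sp_1$. We therefore obtain a smooth bijection $\Psi:(K_\beta)^{N-1}\to\OO_X^\beta$, $(Q_1,\dots,Q_{N-1})\mapsto\mathbf W$. It remains to check that the $Q_p$ are determined uniquely once $U,V$ are fixed. Since $U,V$ are only defined up to $T_\beta$, we fix one choice, and then the $Q_p$ are determined because $W_N$ fixes $Q_{N-1}$ given $U$, and so on inductively. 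So $\OO_X^\beta$ is diffeomorphic to $K_\beta^{N-1}$.

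Next we compute the Jacobian of $\Psi$ with respect to the bi-invariant metrics. By left-invariance it suffices to differentiate at a point and use tangent vectors $Q_p\mapsto Q_p A_p$ with $A_p\in\mathfrak k_\beta$, the anti-Hermitian $d\times d$ matrices over $\F$. The derivative sends $(A_p)$ to tangent vectors whose $p$-th component is, after conjugating by the $Q$'s, $A_p\Lambda-\Lambda A_{p-1}$, where $\Lambda=\Sigma^{1/N}$, $A_0=A_N=0$, and orthogonal changes are harmless. Decompose $\mathfrak k_\beta$ into two pieces. The off-diagonal $(s,l)$ blocks each contribute $\beta$ real dimensions. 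The diagonal part consists of purely imaginary elements of $\F$ and contributes $\beta-1$ dimensions per index $s$. The map is block diagonal with respect to this decomposition, so the Jacobian factors into a product over pairs $s<l$ and over indices $s$.

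The two kinds of blocks are then computed separately.
\begin{itemize}
\item For a pair $(s,l)$, each of the $\beta$ real directions gives the same linear map $\R^{N-1}\to\R^{2N}$. The entries are $a_p\lambda_l-\lambda_s a_{p-1}$ and $-a_p\lambda_s+\lambda_l a_{p-1}$, with $\lambda=\sigma^{1/N}$.
\item For a diagonal index $s$, the map is $a_p\mapsto\lambda_s(a_p-a_{p-1})$, a scaled difference operator on $\R^{N-1}\to\R^N$.
\end{itemize}
The main computational step is the Gram determinant $\det(J^TJ)$ of these tridiagonal maps. For the diagonal block, $J^TJ=\lambda_s^2 L$, where $L$ is the path Laplacian with Dirichlet ends, whose determinant is $N$. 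This gives $\sqrt{\det}=N^{1/2}\lambda_s^{N-1}=N^{1/2}\sigma_s^{1-1/N}$ per imaginary direction, which produces the term $(\beta-1)(\tfrac d2\log N+(1-\tfrac1N)\sum\log\sigma_s)$. For an off-diagonal pair, $J^TJ$ is tridiagonal with diagonal entries $2(\lambda_s^2+\lambda_l^2)$ and off-diagonal entries $-2\lambda_s\lambda_l$, up to normalization constants from the trace pairing. Its determinant follows from the Chebyshev-type recursion $D_n=aD_{n-1}-b^2D_{n-2}$, and equals $\bigl(\frac{\lambda_s^{2N}-\lambda_l^{2N}}{\lambda_s^2-\lambda_l^2}\bigr)^2$ up to powers of 2. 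Taking the square root, the $\beta$ directions give $\bigl(\frac{\sigma_s^2-\sigma_l^2}{\sigma_s^{2/N}-\sigma_l^{2/N}}\bigr)^{\beta/2}$.

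We expect the main obstacle to be the normalization: matching the factors of $2$ from the real trace pairing on off-diagonal anti-Hermitian entries against the convention defining the bi-invariant metric on $K_\beta$, so that those constants cancel exactly and leave $c_\beta^{N-1}$. We would choose the metric on $\mathfrak k_\beta$ as the restriction of the trace pairing, so that $\vol(K_\beta^{N-1})=c_\beta^{N-1}$, and verify the constants on the case $N=1$ or $\Sigma=I$ limits. Finally, integrating the constant Jacobian over $K_\beta^{N-1}$ and taking logarithms yields \eqref{eq:entropy-formula-entrywise}, and \eqref{eq:entropy-formula-vandermonde} is an immediate rewriting.
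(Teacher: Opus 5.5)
Your route is the paper's route: identify $\OO_X^\beta$ with $K_\beta^{N-1}$, reduce to the identity by left-invariance, and split $\mathfrak k_\beta$ into off-diagonal pairs $(s,l)$ (each with $\beta$ real directions) and imaginary diagonal entries (each with $\beta-1$ directions). You then evaluate tridiagonal Gram determinants. Your diagonal block $\lambda_s^2L_{N-1}$, with determinant $N\lambda_s^{2(N-1)}$, is correct. The off-diagonal block, however, is wrong at the decisive step. The recursion $D_n=aD_{n-1}-b^2D_{n-2}$ has $a=\lambda_s^2+\lambda_l^2$, $b=\lambda_s\lambda_l$, $D_0=1$, $D_1=a$. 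Its characteristic roots are $\lambda_s^2$ and $\lambda_l^2$, so
\[
D_{N-1}=\frac{\lambda_s^{2N}-\lambda_l^{2N}}{\lambda_s^2-\lambda_l^2}=\frac{\sigma_s^2-\sigma_l^2}{\sigma_s^{2/N}-\sigma_l^{2/N}}
\]
to the first power, not squared. If the determinant were the square of this quotient, as you state, its square root would be the quotient itself. The $\beta$ directions would then give $\beta\sum_{s<l}\log(\cdots)$ instead of $\frac{\beta}{2}\sum_{s<l}\log(\cdots)$. The exponent $\beta/2$ in your last line is correct, but it does not follow from the value you wrote; it follows only from the corrected $D_{N-1}$, which is the paper's evaluation of $\det H_{sl}$.

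The normalization you flag as the main obstacle needs no limiting-case check. Your proposed check at $N=1$ would be vacuous anyway: the orbit is a point and there are no blocks. Instead, use the trace-orthonormal basis $\frac{1}{\sqrt2}(\eps E_{sl}-\overline{\eps}E_{ls})$ of the off-diagonal part of $\mathfrak k_\beta$. Your matrix, with entries $2(\lambda_s^2+\lambda_l^2)$ and $-2\lambda_s\lambda_l$, is the Gram matrix for the unnormalized basis. Normalizing halves it exactly, giving the block with entries $\lambda_s^2+\lambda_l^2$ and $-\lambda_s\lambda_l$, whose determinant is $D_{N-1}$. So no powers of $2$ remain, and this is the same metric for which $\vol(K_\beta^{N-1})=c_\beta^{N-1}$.

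One smaller point: a smooth bijection $K_\beta^{N-1}\to\OO_X^\beta$ is not automatically a diffeomorphism. You also need the differential to be injective. That does follow from your Gram determinants being positive, together with compactness of $K_\beta^{N-1}$, but it should be said.
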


The assumption of distinct singular values is used to identify the orbit
\(\OO_X^\beta\) with \(K_\beta^{N-1}\) via \Cref{prop:intro-orbit}.
The right-hand side of \eqref{eq:entropy-formula-entrywise}, however, extends
real-analytically through collisions of singular values \cite[Theorem~1.8]{ChenKotwalMenon2025}. Indeed, for
\(1\le s<l\le d\), if \(a=\sigma_s^{2/N}\) and \(b=\sigma_l^{2/N}\), then
\begin{equation}\label{eq:intro-finite-sum-extension}
\frac{\sigma_s^2-\sigma_l^2}
     {\sigma_s^{2/N}-\sigma_l^{2/N}}
=
\frac{a^N-b^N}{a-b}
=
\sum_{m=0}^{N-1}a^{N-1-m}b^m,
\end{equation}
a positive polynomial for \(a,b>0\). Throughout the paper, quotients of this
form are understood by \eqref{eq:intro-finite-sum-extension} when singular values collide.

\begin{remark}[Analogy with RMT]
The parameter \(\beta=\dim_{\R}\F\) plays a role analogous to the Dyson index in
random matrix theory (RMT). In the classical Gaussian ensembles \cite{mehta2004random}, the volumes of
isospectral orbits carry Vandermonde factors with exponents \(\beta=1,2,4\)
\cite{HuangInauenMenon2023}. The entropy formula
\eqref{eq:entropy-formula-entrywise} contains an analogous ratio of
Vandermondes,
\[
\prod_{1\le s<l\le d}
\left(
\frac{\sigma_s^2-\sigma_l^2}
     {\sigma_s^{2/N}-\sigma_l^{2/N}}
\right)^{\beta/2}.
\]
An important difference is worth emphasizing. By
\eqref{eq:intro-finite-sum-extension}, the entropy term exhibits \emph{no} singular-value repulsion, in contrast with the usual logarithmic repulsion in Dyson Brownian motion. Unlike the \(\beta\)-ensembles of Dumitriu and Edelman
\cite{dumitriu2002matrix}, we do not presently have a tridiagonal matrix model for the entropy formula at arbitrary \(\beta>0\).
\end{remark}

For \(\F=\R\), \Cref{thm:intro-entropy} recovers the entropy formula
for the real DLN \cite[Theorem~4]{MenonYu2025}.  Menon and Yu's proof
proceeds by diagonalizing certain tridiagonal Jacobi matrices using the
theory of Chebyshev polynomials.  The same ideas also yield an orthonormal
basis for the tangent space of the balanced manifold.  The resulting basis
allows them to prove that the multiplication map \(\phi\) restricts to a
Riemannian submersion from the balanced manifold in parameter space onto
the space of end-to-end matrices \cite[Theorems~8,~11]{MenonYu2025}.
In this paper, we do not compute the full orthonormal basis or pursue the
corresponding Riemannian submersion for the complex and quaternionic DLNs.
For the K\"ahler reduction of the complex DLN, see
\cite[Chapter~4]{Kotwal2026}.

\subsection{The DLN metric and a determinant identity}
We recall the DLN metric \(g_N\) on \(\GL_d(\F)\), the space of
invertible end-to-end matrices, following \cite{MenonYu2025,bah2022learning}. For \(X\in \GL_d(\F)\), define
the real-linear operator
\[
\mathcal A_{N,X}(Z)
=
\sum_{p=1}^N
(XX^*)^{(N-p)/N}\,Z\,(X^*X)^{(p-1)/N},
\qquad Z\in \Md{d}(\F).
\]
The powers are defined by functional calculus for positive Hermitian matrices.
We regard \(\Md{d}(\F)\) as a real inner product space with the pairing
\(\langle Y,Z\rangle:=\re\tr(Y^*Z)\).  With respect to this pairing,
\(\mathcal A_{N,X}\) is positive.  Under the canonical identification
\(T_X\GL_d(\F)=\Md{d}(\F)\), we define the DLN metric by
\begin{equation}\label{eq:intro-dln-metric}
g_{N,X}(Z_1,Z_2)
=
\re\tr\left(Z_1^*\,\mathcal A_{N,X}^{-1}(Z_2)\right),
\qquad
Z_1,Z_2\in \Md{d}(\F).
\end{equation}
All tangent spaces and determinants below are understood over the underlying
real vector spaces.  In the quaternionic case, the same formulas are justified by
the cyclicity of \(\re\tr\).

\begin{proposition}\label{prop:intro-operator}
Let \(X\in \GL_d(\F)\) have singular value decomposition
\(X=U_N\Sigma U_0^*\), and assume that the singular values of \(X\) are
distinct.  Let \(\beta=\dim_{\R}\F\).  Then
\begin{equation}\label{eq:division-volume-determinant}
\vol(\OO_X^{\beta})\,
\det_{\R}(\mathcal A_{N,X}^{-1})^{1/4}
=
c_{\beta}^{N-1}
N^{(\beta-2)d/4}
\left(
\frac{\det\Sigma}{\det(\Sigma^{1/N})}
\right)^{(\beta-2)/2}.
\end{equation}
Equivalently,
\begin{equation}\label{eq:division-entropy-determinant}
S^{\beta}(X)
=
(N-1)\log c_{\beta}
-\frac14\log\det_{\R}(\mathcal A_{N,X}^{-1})
+\frac{\beta-2}{4}d\log N
+\frac{\beta-2}{2}\log\frac{\det\Sigma}{\det(\Sigma^{1/N})}.
\end{equation}
\end{proposition}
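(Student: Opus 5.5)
The plan is to compute the two factors on the left of \eqref{eq:division-volume-determinant} separately, in the same singular basis, and then multiply. Throughout, write \(\Lambda=\Sigma^{1/N}=\diag(\lambda_1,\ldots,\lambda_d)\), \(a_s=\lambda_s^2=\sigma_s^{2/N}\), and
\[
f_{sl}=\frac{\sigma_s^2-\sigma_l^2}{\sigma_s^{2/N}-\sigma_l^{2/N}}=\frac{a_s^N-a_l^N}{a_s-a_l}.
\]
Both computations reduce to tridiagonal or diagonal quadratic forms indexed by pairs \((s,l)\). The main work is bookkeeping the real dimensions: \(\beta\) real dimensions for each off-diagonal entry and \(\beta-1\) for each imaginary diagonal entry.

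\emph{Step 1: the determinant of \(\mathcal A_{N,X}\).} Write \(Z=U_N Y U_0^*\). This change of variables is an isometry of \(\Md{d}(\F)\) for \(\re\tr\). Since \(XX^*=U_N\Sigma^2U_N^*\) and \(X^*X=U_0\Sigma^2U_0^*\), in these coordinates \(\mathcal A_{N,X}\) acts entrywise by
\[
Y_{sl}\mapsto \Big(\sum_{p=1}^N \sigma_s^{2(N-p)/N}\sigma_l^{2(p-1)/N}\Big)Y_{sl}.
\]
The multiplier is real, so it is a real scalar on each \(\beta\)-dimensional entry, even over \(\HH\). It equals \(f_{sl}\) for \(s\neq l\) and \(N\sigma_s^{2(N-1)/N}\) for \(s=l\). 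Hence
\[
\det_\R\mathcal A_{N,X}=\prod_{s<l}f_{sl}^{2\beta}\,\prod_s\big(N\sigma_s^{2(1-1/N)}\big)^{\beta},
\]
and therefore
\[
\det_\R(\mathcal A^{-1}_{N,X})^{1/4}=\prod_{s<l}f_{sl}^{-\beta/2}\,N^{-\beta d/4}\Big(\frac{\det\Sigma}{\det\Sigma^{1/N}}\Big)^{-\beta/2}.
\]

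\emph{Step 2: the volume of \(\OO_X^\beta\).} By \Cref{prop:intro-orbit}, which I assume identifies the orbit with \(K_\beta^{N-1}\), parametrize \(\OO_X^\beta\) by \(W_p=U_p\Lambda U_{p-1}^*\) with \(U_0,U_N\) fixed and \((U_1,\ldots,U_{N-1})\in K_\beta^{N-1}\). I compute the pullback metric at a point using left-translated tangent vectors \(U_p\mapsto U_p(I+\varepsilon A_p)\), where each \(A_p\) lies in the Lie algebra and \(A_0=A_N=0\). Then \(\dot W_p=U_p(A_p\Lambda-\Lambda A_{p-1})U_{p-1}^*\), and the squared length is \(\sum_p\|A_p\Lambda-\Lambda A_{p-1}\|^2\). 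This form splits into independent blocks.
\begin{itemize}
\item \emph{Off-diagonal block \((s,l)\), \(s<l\).} Put \(x_p=(A_p)_{sl}\in\F\), so that \((A_p)_{ls}=-\bar x_p\). The contribution is
\[
\sum_p |x_p\lambda_l-\lambda_s x_{p-1}|^2+|x_p\lambda_s-\lambda_l x_{p-1}|^2 .
\]
The bi-invariant metric assigns \(2|x_p|^2\) to the same coordinates, so relative to it the form is given by the real tridiagonal matrix \(J_{sl}\) with diagonal entries \(a_s+a_l\) and off-diagonal entries \(-\lambda_s\lambda_l\), acting on \(\F^{N-1}\).
\item \emph{Diagonal block \(s\).} Put \(y_p=(A_p)_{ss}\in\operatorname{Im}\F\), which has \(\beta-1\) real dimensions. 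The contribution is \(a_s\sum_p|y_p-y_{p-1}|^2\), that is, \(a_s\) times the Dirichlet Laplacian on a path of \(N-1\) vertices.
\end{itemize}
The volume density relative to \(c_\beta^{N-1}\) is therefore
\[
\prod_{s<l}(\det J_{sl})^{\beta/2}\,\prod_s\big(a_s^{N-1}N\big)^{(\beta-1)/2}.
\]
It is constant on \(K_\beta^{N-1}\) by invariance, so \(\vol(\OO_X^\beta)\) equals \(c_\beta^{N-1}\) times this density.

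\emph{Step 3: the determinant of \(J_{sl}\), which I expect to be the key step.} Let \(D_k\) denote the leading \(k\times k\) minor of \(J_{sl}\). Expanding along the last row gives
\[
D_k=(a_s+a_l)D_{k-1}-a_sa_lD_{k-2},
\]
whose characteristic roots are \(a_s\) and \(a_l\). Hence \(D_k=(a_s^{k+1}-a_l^{k+1})/(a_s-a_l)\), and so \(\det J_{sl}=f_{sl}\). As a check, for \(N=2\) this gives \(a_s+a_l\). The discrete Laplacian has determinant \(N\), which gives the factor \(N\) in the diagonal block. Also \(a_s^{N-1}=\sigma_s^{2(1-1/N)}\), so the diagonal blocks contribute
\[
N^{(\beta-1)d/2}\Big(\frac{\det\Sigma}{\det\Sigma^{1/N}}\Big)^{\beta-1}.
\]

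\emph{Step 4: combine.} Multiplying the results of Steps 1–3, the \(f_{sl}^{\beta/2}\) factors cancel. The powers of \(N\) combine as \((\beta-1)d/2-\beta d/4=(\beta-2)d/4\). The powers of \(\det\Sigma/\det\Sigma^{1/N}\) combine as \((\beta-1)-\beta/2=(\beta-2)/2\). This is exactly \eqref{eq:division-volume-determinant}, and taking logarithms gives \eqref{eq:division-entropy-determinant}.

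The delicate points are the normalization of the bi-invariant metric (the factor \(2\) for each off-diagonal pair), which must match the convention in \(c_\beta\), and the global identification in \Cref{prop:intro-orbit}, including any discrete or torus ambiguity in \(U_0,U_N\). I would take both of these directly from the earlier proposition.
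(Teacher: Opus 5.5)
Your proposal is correct and follows essentially the same route as the paper. You diagonalize \(\mathcal A_{N,X}\) in the singular basis \(u_s\varepsilon v_l^*\), as in \Cref{lem:det-real-AN-inverse}. You then pull the ambient metric back along \(\yy_X\), which gives \(\beta\) copies of the tridiagonal block \(H_{sl}\) (your \(J_{sl}\)) for each pair \(s<l\) and \(\beta-1\) copies of \(\lambda_s^2L_{N-1}\) for each \(s\). You evaluate these determinants by the same recurrences and cancel the factors \(f_{sl}^{\beta/2}\). The only difference is cosmetic: you compute the pullback metric at an arbitrary point in entry coordinates, which makes left-invariance manifest. The paper instead reduces to \(X=\Sigma\) via the endpoint isometry and works at the identity with the explicit orthonormal basis \(T_{sl}^{\eps},D_s^{\eta}\).
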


When \(\F=\C\), we have \(\beta=2\), and
\eqref{eq:division-volume-determinant} reduces to
\begin{equation}\label{eq:complex-volume-determinant}
\vol(\OO_X^{2})\,
\det_{\R}(\mathcal A_{N,X}^{-1})^{1/4}
=
c_{2}^{N-1}.
\end{equation}

\begin{remark}[Infinite depth and renormalized entropy]
\label{rem:infinite-depth-renormalized-entropy}
Consider the renormalized operator
\[
  \frac{1}{N}\mathcal A_{N,X}(Z)
  \longrightarrow
  \mathcal A_{\infty,X}(Z)
  :=
  \int_0^1
  (XX^*)^{1-t} Z (X^*X)^t\,\dd t,
\]
which defines the metric
\(g_{\infty,X}(Z_1,Z_2)
  :=
  \re\tr\left(
  Z_1^*\mathcal A_{\infty,X}^{-1}(Z_2)
  \right)\)
for the DLN in the infinite-depth limit \cite{CohenMenonVeraszto2023}.

The limiting entropy can be obtained from \Cref{thm:intro-entropy} by
subtracting terms that diverge as \(N\to\infty\).  
The same renormalization applied to \Cref{prop:intro-operator}
provides another expression for the entropy formula at infinite depth.
Passing to the limit gives
\[
  S^\beta_{\infty}(X)
  =
  -\frac14\log\det_{\R}(\mathcal A_{\infty,X}^{-1})
  +\frac{\beta-2}{2}\log\det\Sigma .
\]
Evaluating the determinant in terms of the singular values
\(\sigma_1,\ldots,\sigma_d\) of \(X\) gives
\[
  S^\beta_{\infty}(X)
  =
  (\beta-1)\sum_{s=1}^d \log\sigma_s
  +
  \frac{\beta}{2}
  \sum_{1\le s<l\le d}
  \log\!\left(
  \frac{\sigma_s^2-\sigma_l^2}
       {\log\sigma_s^2-\log\sigma_l^2}
  \right).
\]
As in the formula at finite depth, this expression is understood by continuity at
repeated singular values.
\end{remark}

\subsection{Organization of the paper}

In \Cref{sec:setup}, we set notation for matrices over \(\R\), \(\C\), and
\(\HH\), and introduce the balanced manifold and the orbit
\(\OO_X^{\beta}\).  In \Cref{sec:orbit}, we prove
\Cref{prop:intro-orbit}, identifying \(\OO_X^{\beta}\) with
\(K_{\beta}^{N-1}\) for matrices with distinct singular values.  In
\Cref{sec:vertical}, we compute the metric induced on this orbit by the
ambient real trace pairing.  In \Cref{sec:entropy}, we evaluate the resulting
block determinants and prove \Cref{thm:intro-entropy} and
\Cref{prop:intro-operator}.
\section{Preliminaries}\label{sec:setup}

\subsection*{Matrix conventions and inner products}
For \(A=(a_{sl})\in \Md{d}(\F)\), set
\(
A^*=(\overline{a_{ls}}).
\)
Thus \(A^*\) is the transpose over \(\R\), the conjugate transpose over \(\C\),
and the quaternionic adjoint over \(\HH\).

All inner products are real. We use the pairing
\(
\inner{A}{B}:=\re\tr(A^*B),
\)
where $A,B\in\Md{d}(\F)$.
For quaternionic matrices, we have the cyclic property \(\re\tr(AB)=\re\tr(BA)\) whenever both products are
defined.

We use the singular value decomposition (SVD) over \(\R,\C,\HH\). Thus every
\(X\in\Md{d}(\F)\) admits a decomposition
\[
X=U\Sigma V^*,
\qquad
\Sigma=\diag(\sigma_1,\dots,\sigma_d),
\qquad
\sigma_1\ge\cdots\ge\sigma_d\ge 0,
\]
with \(U,V\in\Orth_d\), \(\U_d\), or \(\Sp_d\), respectively, when
\(\F=\R,\C\) or $\HH$. In the quaternionic case this follows from the spectral theorem
for quaternionic Hermitian matrices \cite[Theorem~7.2]{Zhang1997}.

For $\mathbf A, \mathbf B \in \Md{d}(\F)^N$ where $\mathbf A=(A_N,\dots,A_1), \mathbf B=(B_N,\dots,B_1)$
set
\begin{equation}\label{eq:ambient-inner-product}
\inner{\mathbf A}{\mathbf B}
:=
\sum_{p=1}^N \re\tr(A_p^*B_p).
\end{equation}
This inner product defines the Euclidean metric on \(\Md{d}(\F)^N\).

\subsection*{Fibres and balanced factorizations}
Recall the multiplication map
\[
\phi:\Md{d}(\F)^N\to\Md{d}(\F),
\qquad
\phi(\mathbf W)=W_N\cdots W_1 .
\]
For \(X\in\GL_d(\F)\), the fibre over $X$ is
\[
\Fib_X^\beta:=\phi^{-1}(X).
\]
Since \(X\) is invertible, every element of \(\Fib_X^\beta\) has invertible layers.

The equations for balancedness are
\[
W_{p+1}^*W_{p+1}=W_pW_p^*,
\qquad 1\le p\le N-1.
\]
Together, they define the balanced variety
\[
\MM_{\mathbf{0}}^\beta
:=
\set{
\mathbf W\in\Md{d}(\F)^N:
W_{p+1}^*W_{p+1}=W_pW_p^*
\text{ for }1\le p\le N-1
}.
\]
On \(\GL_d(\F)^N\), the same equations define the balanced manifold
\[
\MM^\beta:=\MM_{\mathbf{0}}^\beta\cap\GL_d(\F)^N .
\]
Hence, the set of balanced factorizations of \(X\) is
\[
\OO_X^\beta:=\Fib_X^\beta\cap\MM^\beta .
\]
When \(X\) has distinct singular values, by \Cref{prop:intro-orbit}, we have that \(\OO_X^\beta\) is a compact smooth submanifold of
\(\Md{d}(\F)^N\). Its volume is always taken with respect to the metric induced by
\eqref{eq:ambient-inner-product}.

\subsection*{Group actions}
Recall that \(K_\beta\) denotes \(\Orth_d\), \(\U_d\), or \(\Sp_d\) according as
\(\beta=1,2,4\), respectively, and write its Lie algebra as
\[
\mathfrak{k}_\beta
:=
\set{A\in\Md{d}(\F):A^*=-A}.
\]
We equip \(\mathfrak{k}_\beta\) with the real inner product
\begin{equation}\label{eq:lie-inner-product}
\inner{A}{B}_{\mathfrak{k}}
:=
\re\tr(A^*B).
\end{equation}

The group \(K_\beta^{N-1}\) acts on \(\Md{d}(\F)^N\) by
\begin{equation}\label{eq:compact-action}
(U_{N-1},\dots,U_1)\cdot(W_N,\dots,W_1)
:=
\bigl(
W_NU_{N-1}^*,
U_{N-1}W_{N-1}U_{N-2}^*,
\dots,
U_1W_1
\bigr).
\end{equation}
The action is isometric, preserves the end-to-end product, and preserves the equations for
balancedness. Indeed, the intermediate factors cancel in the product, and
for \(1\le p\le N-1\),
\[
\widetilde W_{p+1}^*\widetilde W_{p+1}
=
U_pW_{p+1}^*W_{p+1}U_p^*,
\qquad
\widetilde W_p\widetilde W_p^*
=
U_pW_pW_p^*U_p^* .
\]
Hence each \(\OO_X^\beta\) is invariant under \eqref{eq:compact-action}.

We also use the action of \(K_\beta\times K_\beta\) on \(\Md{d}(\F)^N\) given by
\begin{equation}\label{eq:endpoint-action}
(Q,R)\cdot(W_N,\dots,W_1)
:=
(QW_N,W_{N-1},\dots,W_2,W_1R^*).
\end{equation}
This action is also isometric.

\begin{proposition}\label{prop:unitary-invariance}
Let \(X\in\GL_d(\F)\) have distinct singular values, and let \(Q,R\in K_\beta\).
The endpoint action \eqref{eq:endpoint-action} restricts to an isometry
\[
\Psi_{Q,R}:\OO_X^\beta\longrightarrow\OO_{QXR^*}^\beta,
\qquad
\Psi_{Q,R}(\mathbf W)=(Q,R)\cdot\mathbf W .
\]
In particular, if \(X=U_N\Sigma U_0^*\) is a singular value decomposition, then
\[
S^\beta(X)=S^\beta(\Sigma).
\]
\end{proposition}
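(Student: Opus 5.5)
We argue directly from the definitions. First we check that \(\Psi_{Q,R}\) maps \(\OO_X^\beta\) into \(\OO_{QXR^*}^\beta\). Let \(\mathbf W\in\OO_X^\beta\) and put \(\widetilde{\mathbf W}=(Q,R)\cdot\mathbf W\). The product is
\[
\widetilde W_N\cdots\widetilde W_1=QW_NW_{N-1}\cdots W_2W_1R^*=QXR^*.
\]
All layers stay invertible, since \(Q,R\) are invertible. For the balancedness equations, the interior relations with \(2\le p\le N-2\) are untouched, so we only need the endpoint ones. At the top (\(p=N-1\)),
\[
\widetilde W_N^*\widetilde W_N=W_N^*Q^*QW_N=W_N^*W_N=W_{N-1}W_{N-1}^*.
\]
At the bottom (\(p=1\)),
\[
\widetilde W_1\widetilde W_1^*=W_1R^*RW_1^*=W_1W_1^*=W_2^*W_2.
\]
When \(N=2\) these two computations concern the same equation, and we get \(\widetilde W_2^*\widetilde W_2=W_2^*W_2=W_1W_1^*=\widetilde W_1\widetilde W_1^*\). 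Both steps use only \(Q^*Q=R^*R=I\), which holds over \(\HH\) just as over \(\R\) and \(\C\).

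Next, the inverse map. The same computation applied to \((Q^*,R^*)\) and the matrix \(QXR^*\) shows that \(\Psi_{Q^*,R^*}\) maps \(\OO_{QXR^*}^\beta\) into \(\OO_X^\beta\), and it is clearly inverse to \(\Psi_{Q,R}\). So \(\Psi_{Q,R}\) is a bijection between the two sets.

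The map \(\Psi_{Q,R}\) is the restriction of a real-linear map of \(\Md{d}(\F)^N\) that preserves \eqref{eq:ambient-inner-product}. Indeed,
\[
\re\tr\bigl((QA)^*(QB)\bigr)=\re\tr(A^*B),
\]
and, using cyclicity of \(\re\tr\) (needed in the quaternionic case),
\[
\re\tr\bigl((AR^*)^*(BR^*)\bigr)=\re\tr(RA^*BR^*)=\re\tr(A^*B).
\]
An ambient linear isometry that carries the submanifold \(\OO_X^\beta\) bijectively onto \(\OO_{QXR^*}^\beta\) is a diffeomorphism between them, and its differential preserves the induced metrics. Hence \(\Psi_{Q,R}\) is an isometry. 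Both sets are submanifolds by \Cref{prop:intro-orbit}, because \(QXR^*\) has the same singular values as \(X\), which are distinct. In particular \(\vol(\OO_X^\beta)=\vol(\OO_{QXR^*}^\beta)\).

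For the consequence, write \(X=U_N\Sigma U_0^*\) and take \(Q=U_N^*\), \(R=U_0^*\). Then \(QXR^*=U_N^*U_N\Sigma U_0^*U_0=\Sigma\), so equality of volumes gives \(S^\beta(X)=S^\beta(\Sigma)\).

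The proof is routine. The only point needing care is the cyclicity of \(\re\tr\) over \(\HH\), which the paper has already recorded.
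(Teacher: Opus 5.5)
Your proposal is correct and follows essentially the same route as the paper. You check that the endpoint action preserves the product, invertibility, and the two endpoint balancedness equations, note that $(Q^*,R^*)$ gives the inverse, and deduce the isometry from invariance of the ambient pairing before specializing to $Q=U_N^*$, $R=U_0^*$. Your extra remarks only make explicit what the paper leaves implicit: the $N=2$ case, cyclicity of $\re\tr$ over $\HH$, and the submanifold structure of both orbits via \Cref{prop:intro-orbit}. Citing \Cref{prop:intro-orbit} there is not circular, since its proof does not use this proposition.
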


\begin{proof}
Let \(\mathbf W=(W_N,\dots,W_1)\in\Fib_X^{\beta}\). Then
\[
\phi\bigl((Q,R)\cdot\mathbf W\bigr)
=
QW_N\cdots W_1R^*
=
QXR^*.
\]
Thus the endpoint action sends \(\Fib_X^{\beta}\) to \(\Fib_{QXR^*}^{\beta}\).

The only equations that can change are those with \(p=1\) or
\(p=N-1\). For these equations,
\[
(QW_N)^*(QW_N)=W_N^*W_N,
\qquad
(W_1R^*)(W_1R^*)^*=W_1W_1^*.
\]
Hence the equations for balancedness are preserved. The action also preserves
invertibility, so it sends \(\MM^{\beta}\) to itself. Therefore it restricts to a map
\[
\OO_X^{\beta}\longrightarrow\OO_{QXR^*}^{\beta}.
\]
Its inverse is the restriction of the endpoint action by \((Q^*,R^*)\). Since the
endpoint action preserves \eqref{eq:ambient-inner-product}, the restricted map is an
isometry.

Taking \(Q=U_N^*\) and \(R=U_0^*\) gives an isometry
\[
\OO_X^{\beta}\longrightarrow\OO_{\Sigma}^{\beta}.
\]
The volumes are equal, and the entropy identity follows.
\end{proof}

\subsection*{Parametrization of \texorpdfstring{\(\OO_X^{\beta}\)}{OXbeta}}

Choose and fix a singular value decomposition
\(
X=U_N\Sigma U_0^*,
\)
and set \(\Lambda=\Sigma^{1/N}\), where the positive $N$-th root is taken.
The center associated with this chosen SVD is the balanced factorization obtained
by distributing the singular values evenly across the layers
\[
\mathbf C_X=(U_N\Lambda,\Lambda,\ldots,\Lambda U_0^*).
\]

\begin{proposition}\label{prop:intro-orbit}
Let \(X\in\GL_d(\F)\) have distinct singular values. Then the map
\begin{equation}\label{eq:orbit-map}
\yy_X:K_{\beta}^{N-1}\to \OO_X^{\beta}
\end{equation}
defined by
\[
\yy_X(U_{N-1},\ldots,U_1)
=
\bigl(
U_N\Lambda U_{N-1}^*,\,
U_{N-1}\Lambda U_{N-2}^*,\,
\ldots,\,
U_1\Lambda U_0^*
\bigr)
\]
is a diffeomorphism. In particular,
\(
\OO_X^{\beta}=K_{\beta}^{N-1}\cdot \mathbf C_X.
\)
\end{proposition}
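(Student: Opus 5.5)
We will prove that \(\yy_X\) is well defined, bijective, and an immersion. Since \(K_\beta^{N-1}\) is compact, an injective immersion into \(\Md{d}(\F)^N\) is an embedding, so this shows that \(\OO_X^\beta\) is a compact smooth submanifold and that \(\yy_X\) is a diffeomorphism onto it.

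\emph{Well-definedness.} Each component of \(\yy_X(U_{N-1},\dots,U_1)\) has the form \(U_p\Lambda U_{p-1}^*\), so the product telescopes to \(U_N\Lambda^N U_0^*=X\). For balancedness, put \(W_p=U_p\Lambda U_{p-1}^*\). Then
\[
W_{p+1}^*W_{p+1}=U_p\Lambda^2U_p^*=W_pW_p^*,
\]
so \(\yy_X\) lands in \(\OO_X^\beta\). It is clearly smooth. Moreover \(\yy_X(\mathbf U)=\mathbf U\cdot\mathbf C_X\) under the action \eqref{eq:compact-action}, so its image is exactly the orbit \(K_\beta^{N-1}\cdot\mathbf C_X\).

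\emph{Surjectivity.} Let \(\mathbf W\in\OO_X^\beta\). Set \(P_p:=W_pW_p^*=W_{p+1}^*W_{p+1}\) for \(1\le p\le N-1\), together with \(P_N:=W_NW_N^*\) and \(P_0:=W_1^*W_1\). Balancedness gives, inductively,
\[
XX^*=W_N\cdots W_1W_1^*\cdots W_N^* = (W_NW_N^*)^N ,
\]
since each inner \(W_pW_p^*\) can be replaced by \(W_{p+1}^*W_{p+1}\), and the factors collapse. This is the standard balanced-telescoping computation, done by induction on \(N\); the same computation yields \(W_{k}\cdots W_1(W_k\cdots W_1)^*=P_k^k\). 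Hence \(P_N=(XX^*)^{1/N}=U_N\Lambda^2U_N^*\). Next we use polar decompositions. Each \(W_p\) is invertible, and \(W_pW_p^*\) and \(W_p^*W_p\) have the same spectrum, namely the eigenvalues of \(\Lambda^2\), so all the \(P_p\) are isospectral with \(\Lambda^2\). Write \(P_p=U_p\Lambda^2U_p^*\) with \(U_p\in K_\beta\); for \(p=N\) we use the given \(U_N\). The equations \(W_{p+1}^*W_{p+1}=U_p\Lambda^2U_p^*\) and \(W_{p+1}W_{p+1}^*=U_{p+1}\Lambda^2U_{p+1}^*\) then imply that \(V:=U_{p+1}^*W_{p+1}U_p\) satisfies \(V^*V=VV^*=\Lambda^2\). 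Thus \(V\) is normal with \(|V|=\Lambda\), and since \(\Lambda\) has distinct entries, \(V\) is diagonal with entries \(\lambda_s\zeta_s\) for unit scalars \(\zeta_s\in\F\). Absorbing these phases successively into \(U_p\) (diagonal unit matrices lie in \(K_\beta\)) makes every \(V=\Lambda\). The last layer is then forced: \(W_1=(W_N\cdots W_2)^{-1}X=U_1\Lambda U_0^*\), so the phase bookkeeping at the end is consistent with the fixed \(U_0\). Hence \(\mathbf W\) lies in the image.

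\emph{Injectivity and immersion.} Suppose \(\yy_X(\mathbf U)=\yy_X(\mathbf U')\). Comparing first components, \(U_N\Lambda U_{N-1}^*=U_N\Lambda U_{N-1}'^*\) forces \(U_{N-1}=U'_{N-1}\), and proceeding inductively down the layers gives \(\mathbf U=\mathbf U'\). The whole map is therefore determined layer by layer, with no freedom left. For the immersion property, differentiate at \(\mathbf U\), writing the tangent vector as \(A_pU_p\) with \(A_p\in\mathfrak k_\beta\). If the differential vanishes, the first component gives \(-U_N\Lambda U_{N-1}^*A_{N-1}=0\), hence \(A_{N-1}=0\), and induction gives all \(A_p=0\). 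Note that injectivity does not depend on the distinctness of the singular values; the distinct-singular-value hypothesis enters only in the surjectivity step, through the diagonal-phase argument.

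The main obstacle is the surjectivity step, specifically the quaternionic case. There the unit scalars \(\zeta_s\) are noncommutative, so we must check that a normal matrix \(V\) with \(V^*V=\Lambda^2\) having distinct diagonal entries is diagonal. This follows because \(V\) commutes with \(\Lambda^2=V^*V=VV^*\), and since \(\Lambda^2\) is real diagonal with distinct entries, any matrix commuting with it is diagonal, even over \(\HH\). The phases are then absorbed by right-multiplying \(U_p\) by a diagonal element of \(\Sp_d\).
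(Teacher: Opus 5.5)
Your proof is correct. Outside the surjectivity step it matches the paper: well-definedness, layer-by-layer injectivity (you run it from $W_N$ down, the paper from $W_1$ up), the immersion check (you differentiate at a general point, the paper reduces to the identity via $\yy_X(\mathbf V\mathbf U)=\mathbf V\cdot\yy_X(\mathbf U)$), and the ``compact injective immersion is an embedding'' conclusion.

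For surjectivity the routes genuinely differ. The paper first proves a normal form valid at every point of $\MM^\beta$. It takes an SVD $W_1=Q_1\Lambda_{\mathbf W}Q_0^*$ and propagates the frame by $Q_{p+1}:=W_{p+1}Q_p\Lambda_{\mathbf W}^{-1}$, which lies in $K_\beta$ by balancedness. Then $X=Q_N\Lambda_{\mathbf W}^NQ_0^*$ is an SVD, and a single two-sided SVD-uniqueness lemma gives $Q_N=U_Nt$ and $Q_0=U_0t$ with $t\in\Delta_\beta$.

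You instead fix the spectrum via the telescoping identity $W_NW_N^*=(XX^*)^{1/N}$ and diagonalize each $P_p$ independently. You then use distinctness at every layer to show that $U_{p+1}^*W_{p+1}U_p$ is a diagonal phase matrix, absorb these phases, and recover $W_1$ from the product constraint. That last step neatly replaces the paper's comparison at the $U_0$ end.

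The paper's route gives a normal form that needs no distinctness and confines that hypothesis to one lemma. Yours, by choosing the frames independently, creates a phase ambiguity at each layer, which the same uniqueness argument must remove $N-1$ times. In exchange, it makes explicit the identity $W_NW_N^*=(XX^*)^{1/N}$ and the isospectrality of all the $P_p$.

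Two points for the write-up:
\begin{itemize}
\item Diagonalize only $P_1,\dots,P_{N-1}$. Choosing a $U_0$ from $P_0$ would clash with the fixed one, and it is not needed.
\item Absorb phases in the order $p=N-1,\dots,1$, recomputing each connecting matrix with the updated $U_p$. This is legitimate because $Z^*\Lambda^2Z=\Lambda^2$ for every diagonal unit $Z$, so the updated $U_p$ still diagonalizes $P_p$.
\end{itemize}
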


The product of the displayed factors telescopes to \(X\), and the equations for balancedness are immediate. Hence \(\yy_X\) is well defined as a map into
\(\OO_X^\beta\). The proof that \(\yy_X\) is a diffeomorphism is presented in \Cref{sec:orbit}.

The map \(\yy_X\) parametrizes the orbit \(\OO_X^\beta\) for a fixed end-to-end matrix \(X\). One could similarly parametrize the full balanced manifold \(\mathcal M^\beta\), allowing \(X\) to vary, but here we restrict attention to the orbit map \(\yy_X\). In the real case, the Riemannian geometry of \(\mathcal M^\beta\) is developed in \cite{MenonYu2025}.

\section{Proof of Proposition 2.2}\label{sec:orbit}

We use the ordered SVD fixed in the setup for
\Cref{prop:intro-orbit},
\[
X=U_N\Sigma U_0^*,
\qquad
\Sigma=\diag(\sigma_1,\dots,\sigma_d),
\qquad
\sigma_1>\cdots>\sigma_d>0,
\]
and set \(\Lambda=\Sigma^{1/N}\). Since the singular values are distinct and
ordered, the diagonal factor \(\Sigma\) is determined by \(X\). The only remaining
ambiguity in the singular-vector factors is simultaneous multiplication on the
right by the same diagonal unit.
We therefore introduce the diagonal stabilizer
\[
\Delta_{\beta}:=Z_{K_{\beta}}(\Sigma)
=
\{t\in K_{\beta}:t\Sigma t^*=\Sigma\}.
\]
Since the singular values are distinct, this stabilizer is explicitly
\[
\Delta_{\beta}
=
\begin{cases}
\set{\diag(\eps_1,\dots,\eps_d):\eps_s=\pm1}, & \F=\R,\\
\set{\diag(z_1,\dots,z_d):|z_s|=1}, & \F=\C,\\
\set{\diag(q_1,\dots,q_d):q_s\in\Sp_1}, & \F=\HH.
\end{cases}
\]
Indeed, every \(t\in\Delta_{\beta}\) gives the same singular value decomposition
\[
X=(U_Nt)\Sigma(U_0t)^*,
\]
since \(t\Sigma t^*=\Sigma\). Conversely, as shown in
\Cref{lem:svd-uniqueness} below, every ordered SVD of
\(X\) with diagonal factor \(\Sigma\) arises uniquely in this way.

\begin{lemma}\label{lem:balanced-normal-form}
Let \(\mathbf W=(W_N,\dots,W_1)\in \MM^{\beta}\). Then there exist matrices
\(
Q_0,\dots,Q_N\in K_{\beta}
\)
and a positive diagonal matrix \(\Lambda_{\mathbf W}\), with diagonal entries in
nonincreasing order, such that
\[
W_p=Q_p\Lambda_{\mathbf W}Q_{p-1}^*,
\qquad
1\le p\le N.
\]
\end{lemma}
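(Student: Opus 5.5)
We build the matrices \(Q_p\) inductively from a single polar-type decomposition, using balancedness to propagate the same positive diagonal factor through all layers. Since \(\mathbf W\in\MM^\beta\), each \(W_p\) is invertible, so \(W_1W_1^*\) is positive definite Hermitian. By the spectral theorem over \(\F\) (quaternionic case via \cite[Theorem~7.2]{Zhang1997}), there are \(Q_1\in K_\beta\) and a positive diagonal \(\Lambda_{\mathbf W}\) with nonincreasing entries such that
\[
W_1W_1^*=Q_1\Lambda_{\mathbf W}^2Q_1^*.
\]
Set \(Q_0:=W_1^*Q_1\Lambda_{\mathbf W}^{-1}\). Then
\[
Q_0^*Q_0=\Lambda_{\mathbf W}^{-1}Q_1^*W_1W_1^*Q_1\Lambda_{\mathbf W}^{-1}=\Id,
\]
so \(Q_0\in K_\beta\). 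Moreover \(Q_1\Lambda_{\mathbf W}Q_0^*=Q_1\Lambda_{\mathbf W}\Lambda_{\mathbf W}^{-1}Q_1^*W_1=W_1\). This is the base case \(p=1\).

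For the inductive step, suppose \(W_p=Q_p\Lambda_{\mathbf W}Q_{p-1}^*\) with \(Q_p,Q_{p-1}\in K_\beta\). Balancedness gives
\[
W_{p+1}^*W_{p+1}=W_pW_p^*=Q_p\Lambda_{\mathbf W}^2Q_p^*.
\]
Define \(Q_{p+1}:=W_{p+1}Q_p\Lambda_{\mathbf W}^{-1}\). Then
\[
Q_{p+1}^*Q_{p+1}=\Lambda_{\mathbf W}^{-1}Q_p^*W_{p+1}^*W_{p+1}Q_p\Lambda_{\mathbf W}^{-1}=\Id,
\]
so \(Q_{p+1}\in K_\beta\). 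For square matrices over \(\F\) a one-sided inverse is two-sided; over \(\HH\) this follows from the complex representation. Finally,
\[
Q_{p+1}\Lambda_{\mathbf W}Q_p^*=W_{p+1}Q_pQ_p^*=W_{p+1},
\]
since \(Q_pQ_p^*=\Id\). Iterating up to \(p=N\) yields \(Q_0,\dots,Q_N\).

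The only delicate point is the quaternionic case. We need that \(A^*A=\Id\) implies \(AA^*=\Id\), and that products and adjoints behave as over a field despite noncommutativity. Both hold because \((AB)^*=B^*A^*\) for quaternionic matrices, and because of the injective \(\R\)-algebra embedding \(\Md{d}(\HH)\hookrightarrow\Md{2d}(\C)\), which is compatible with \(*\). The argument is otherwise uniform in \(\beta\) and does not use distinctness of the singular values.
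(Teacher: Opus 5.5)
Your proposal is correct and follows the paper's argument: the inductive step, which defines \(Q_{p+1}:=W_{p+1}Q_p\Lambda_{\mathbf W}^{-1}\) and uses balancedness to check \(Q_{p+1}^*Q_{p+1}=\Id\), is identical to the paper's. The paper starts directly from a singular value decomposition \(W_1=Q_1\Lambda_{\mathbf W}Q_0^*\), whereas you rebuild that decomposition from the spectral theorem for \(W_1W_1^*\); you also make explicit the one-sided-to-two-sided inverse point over \(\HH\), which the paper leaves implicit.
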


\begin{proof}
Choose a singular value decomposition of the first layer,
\(
W_1=Q_1\Lambda_{\mathbf W}Q_0^*,
\)
where \(\Lambda_{\mathbf W}\) is positive diagonal with diagonal entries in
nonincreasing order. Suppose inductively that
\(
W_p=Q_p\Lambda_{\mathbf W}Q_{p-1}^*
\)
has been constructed for some \(1\le p\le N-1\). Then
\[
W_pW_p^*=Q_p\Lambda_{\mathbf W}^2Q_p^*.
\]
Since \(\mathbf W\) is balanced,
\[
W_{p+1}^*W_{p+1}=W_pW_p^*
=Q_p\Lambda_{\mathbf W}^2Q_p^*.
\]
Define
\[
Q_{p+1}:=W_{p+1}Q_p\Lambda_{\mathbf W}^{-1}.
\]
Because \(\mathbf W\in\MM^{\beta}\), all layers are invertible, so this is well defined.
Moreover,
\[
Q_{p+1}^*Q_{p+1}
=
\Lambda_{\mathbf W}^{-1}Q_p^*W_{p+1}^*W_{p+1}Q_p
\Lambda_{\mathbf W}^{-1}
=
I.
\]
Thus \(Q_{p+1}\in K_{\beta}\), and
\(
W_{p+1}=Q_{p+1}\Lambda_{\mathbf W}Q_p^*.
\)
The claim follows by induction.
\end{proof}

The next lemma is the only point in the proof where distinct singular values are
used.

\begin{lemma}\label{lem:svd-uniqueness}
Assume that \(\Sigma=\diag(\sigma_1,\dots,\sigma_d)\) has distinct positive entries.
If
\[
Q\Sigma R^*=\widetilde Q\Sigma \widetilde R^*,
\qquad
Q,R,\widetilde Q,\widetilde R\in K_{\beta},
\]
then there exists a unique \(t\in \Delta_{\beta}\) such that
\[
\widetilde Q=Qt,
\qquad
\widetilde R=Rt.
\]
\end{lemma}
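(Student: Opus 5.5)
Set \(t:=Q^*\widetilde Q\) and \(u:=R^*\widetilde R\), both in \(K_\beta\). Multiplying the identity \(Q\Sigma R^*=\widetilde Q\Sigma\widetilde R^*\) on the left by \(Q^*\) and on the right by \(\widetilde R\) gives
\[
\Sigma u=t\Sigma .
\]
It then suffices to show that \(t=u\) and \(t\in\Delta_\beta\). Uniqueness is immediate, since \(t\) is forced to equal \(Q^*\widetilde Q\).

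First eliminate one unknown by forming Gram matrices. From \(\Sigma u=t\Sigma\) we get
\[
\Sigma u u^*\Sigma^*=t\Sigma\Sigma^* t^* .
\]
Since \(u\in K_\beta\) and \(\Sigma\) is real diagonal, this reads \(\Sigma^2=t\Sigma^2t^*\), that is, \(\Sigma^2 t=t\Sigma^2\). Compare entries: \((\sigma_s^2-\sigma_l^2)t_{sl}=0\). Here \(\sigma_s^2\) is a real scalar, so it commutes with quaternionic entries and the computation is valid over \(\HH\). The squared singular values are distinct, so \(t_{sl}=0\) for \(s\neq l\). Hence \(t\) is diagonal with unit entries. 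The diagonal entries of a diagonal element of \(K_\beta\) satisfy \(\overline{t_{ss}}t_{ss}=1\), so they are \(\pm1\), unit complex numbers, or elements of \(\Sp_1\). Because real diagonal \(\Sigma\) commutes with any diagonal matrix, \(t\Sigma t^*=\Sigma\). Thus \(t\in\Delta_\beta\).

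Next, \(\Sigma u=t\Sigma=\Sigma t\), again because real diagonal matrices commute with diagonal matrices over \(\F\). Since \(\Sigma\) is invertible (its entries are positive), \(u=t\). Unwinding the definitions gives \(\widetilde Q=Qt\) and \(\widetilde R=Ru=Rt\).

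The main point needing care is noncommutativity in the quaternionic case. Every step uses only that real scalars are central in \(\HH\) and that \(A^*A=I\) characterizes \(K_\beta\). No eigenvalue theory over \(\HH\) is needed, since the argument is entrywise. Distinctness enters only in deducing \(t_{sl}=0\), and positivity of the entries guarantees \(\sigma_s^2\neq\sigma_l^2\).
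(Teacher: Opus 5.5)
Your proof is correct and follows essentially the same route as the paper: set $t=Q^*\widetilde Q$, use the Gram identity $\Sigma^2=t\Sigma^2t^*$ and the distinctness of the $\sigma_s^2$ to force $t$ to be diagonal, and then identify the other factor with $t$. The only minor difference is that you obtain $u=t$ by cancelling the invertible $\Sigma$ in $\Sigma u=\Sigma t$, whereas the paper shows that both matrices are diagonal and compares the diagonal entries $a_s\sigma_s\overline{b_s}=\sigma_s$.
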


\begin{proof}
Set
\[
A:=Q^*\widetilde Q,
\qquad
B:=R^*\widetilde R.
\]
Then \(A,B\in K_{\beta}\) and
\(
\Sigma=A\Sigma B^*.
\)
Multiplying by the adjoint gives
\[
\Sigma^2=A\Sigma^2A^*,
\qquad
\Sigma^2=B\Sigma^2B^*.
\]
Hence \(A\) and \(B\) commute with \(\Sigma^2\). Since \(\Sigma^2\) is diagonal
with distinct real diagonal entries, both \(A\) and \(B\) are diagonal. Indeed, for
\(s\ne l\),
\[
(\sigma_s^2-\sigma_l^2)A_{sl}=0,
\qquad
(\sigma_s^2-\sigma_l^2)B_{sl}=0.
\]
Thus
\[
A=\diag(a_1,\dots,a_d),
\qquad
B=\diag(b_1,\dots,b_d),
\]
with \(\abs{a_s}=\abs{b_s}=1\) for each \(s\). Returning to
\(\Sigma=A\Sigma B^*\) and comparing diagonal entries gives
\(
a_s\sigma_s\overline{b_s}=\sigma_s,
\)
for $1\le s\le d.$
Since \(\sigma_s>0\), we obtain \(a_s=b_s\) for every \(s\). Therefore
\(A=B=t\) for some \(t\in\Delta_{\beta}\), and the stated formulas follow.
Uniqueness follows from \(t=Q^*\widetilde Q\).
\end{proof}

\begin{proof}[Proof of \Cref{prop:intro-orbit}]
Viewed as a map to \(\Md{d}(\F)^N\), \(\yy_X\) is smooth by construction.
Its image lies in \(\OO_X^{\beta}\) because the product of the factors telescopes to
\(X\), and the equations for balancedness are immediate from the definition.

To prove surjectivity, let
\(
\mathbf W=(W_N,\dots,W_1)\in \OO_X^{\beta}.
\)
By \Cref{lem:balanced-normal-form}, there exist \(Q_0,\dots,Q_N\in K_{\beta}\) and
a positive diagonal matrix \(\Lambda_{\mathbf W}\), with diagonal entries in
nonincreasing order, such that
\(
W_p=Q_p\Lambda_{\mathbf W}Q_{p-1}^*,
\)
for $1\le p\le N.$
Multiplying the factors gives
\(
X=Q_N\Lambda_{\mathbf W}^NQ_0^*.
\)
Since \(\Lambda_{\mathbf W}^N\) is positive diagonal with entries in nonincreasing
order, this is a singular value decomposition of \(X\) with ordered singular values.
Comparing with $X=U_N\Sigma U_0^*$, whose singular values are strictly decreasing,
we obtain
\(
\Lambda_{\mathbf W}^N=\Sigma,
\)
and $\Lambda_{\mathbf W}=\Lambda.$
Now compare the two singular value decompositions
\[
X=U_N\Sigma U_0^*=Q_N\Sigma Q_0^*.
\]
By \Cref{lem:svd-uniqueness}, there exists \(t\in\Delta_{\beta}\) such that
\[
Q_N=U_Nt,
\qquad
Q_0=U_0t.
\]
Set
\[
V_p:=Q_pt^*,
\qquad
1\le p\le N-1.
\]
Since \(t\) commutes with \(\Lambda\), we have
\[
W_p=Q_p\Lambda Q_{p-1}^*
=V_p\Lambda V_{p-1}^*,
\qquad
2\le p\le N-1,
\]
and also
\[
W_N=U_N\Lambda V_{N-1}^*,
\qquad
W_1=V_1\Lambda U_0^*.
\]
Thus
\(
\mathbf W=\yy_X(V_{N-1},\dots,V_1),
\)
so \(\yy_X\) is surjective.

To prove injectivity, suppose
\[
\yy_X(A_{N-1},\dots,A_1)=\yy_X(B_{N-1},\dots,B_1).
\]
Comparing the first layer gives
\(
A_1\Lambda U_0^*=B_1\Lambda U_0^*,
\)
hence \(A_1=B_1\). If \(A_{p-1}=B_{p-1}\) for some \(2\le p\le N-1\), then
comparison of the \(p\)-th layer gives
\[
A_p\Lambda A_{p-1}^*
=
B_p\Lambda B_{p-1}^*
=
B_p\Lambda A_{p-1}^*,
\]
so \(A_p=B_p\). Induction proves injectivity.

It remains to show that \(\yy_X\) is an immersion. For
\(\mathbf V,\mathbf U\in K_{\beta}^{N-1}\), the orbit map satisfies
\[
\yy_X(\mathbf V\mathbf U)=\mathbf V\cdot \yy_X(\mathbf U),
\]
where multiplication on the left is componentwise and the dot denotes the
\(K_{\beta}^{N-1}\)-action on \(\Md{d}(\F)^N\). Hence it is enough to check the
differential at the identity.

Let
\(
\mathbf a=(a_{N-1},\dots,a_1)\in \mathfrak k_{\beta}^{N-1}.
\)
Differentiating
\[
\tau\longmapsto
\yy_X(\exp(\tau a_{N-1}),\dots,\exp(\tau a_1))
\]
at \(\tau=0\) gives the tangent vector with layer components
\begin{equation}\label{eq:general-tangent-formula}
\begin{aligned}
\bigl((\dd\yy_X)_{\mathbf e}(\mathbf a)\bigr)_N
&=-U_N\Lambda a_{N-1},\\
\bigl((\dd\yy_X)_{\mathbf e}(\mathbf a)\bigr)_p
&=a_p\Lambda-\Lambda a_{p-1},
\qquad 2\le p\le N-1,\\
\bigl((\dd\yy_X)_{\mathbf e}(\mathbf a)\bigr)_1
&=a_1\Lambda U_0^*.
\end{aligned}
\end{equation}
If \((\dd\yy_X)_{\mathbf e}(\mathbf a)=0\), then the first layer gives
\(
a_1\Lambda U_0^*=0,
\)
so \(a_1=0\). If \(a_{p-1}=0\) for some \(2\le p\le N-1\), then the \(p\)-th
layer gives
\(
a_p\Lambda=0,
\)
so \(a_p=0\). Hence \(a_1=\cdots=a_{N-1}=0\). Therefore
\((\dd\yy_X)_{\mathbf e}\) is injective, and \(\yy_X\) is an immersion.

Since \(K_{\beta}^{N-1}\) is compact, the injective immersion \(\yy_X\) is proper as
a map to \(\Md{d}(\F)^N\). A proper injective immersion is an embedding. By
surjectivity, its image is \(\OO_X^{\beta}\). Thus \(\OO_X^{\beta}\) is an embedded
submanifold of \(\Md{d}(\F)^N\), and \(\yy_X\) is a diffeomorphism onto
\(\OO_X^{\beta}\). Since \(K_{\beta}^{N-1}\) is compact, \(\OO_X^{\beta}\) is compact.

Finally, \(\yy_X(\mathbf e)= \mathbf C_X\), and the equivariance above gives
\[
\OO_X^{\beta}=\yy_X(K_{\beta}^{N-1})=K_{\beta}^{N-1}\cdot \mathbf C_X.
\]
\end{proof}

\begin{corollary}\label{cor:tangent-space-diagonal}
At the diagonal point \(\mathbf C=(\Lambda,\dots,\Lambda)\in \OO_{\Sigma}^{\beta}\), the
tangent space is
\[
T_{\mathbf C}\OO_{\Sigma}^{\beta}
=
\set{\mathbf c(\mathbf a):\mathbf a\in \mathfrak k_{\beta}^{N-1}},
\]
where \(\mathbf c(\mathbf a)\) has layer components
\begin{equation}\label{eq:diagonal-tangent-formula}
\begin{aligned}
\mathbf c(\mathbf a)_N
&=-\Lambda a_{N-1},\\
\mathbf c(\mathbf a)_p
&=a_p\Lambda-\Lambda a_{p-1},
\qquad 2\le p\le N-1,\\
\mathbf c(\mathbf a)_1
&=a_1\Lambda.
\end{aligned}
\end{equation}
\end{corollary}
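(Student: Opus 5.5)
This is immediate from \Cref{prop:intro-orbit} applied to $X=\Sigma$ with the trivial SVD $U_N=U_0=I$. In that case $\mathbf C_\Sigma=(\Lambda,\dots,\Lambda)=\mathbf C$, and $\yy_\Sigma(\mathbf e)=\mathbf C$.

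Since $\yy_\Sigma\colon K_\beta^{N-1}\to\OO_\Sigma^\beta$ is a diffeomorphism, its differential at the identity $\mathbf e$ is a linear isomorphism
\[
(\dd\yy_\Sigma)_{\mathbf e}\colon \mathfrak k_\beta^{N-1}\longrightarrow T_{\mathbf C}\OO_\Sigma^\beta .
\]
Hence $T_{\mathbf C}\OO_\Sigma^\beta$ equals the image of $(\dd\yy_\Sigma)_{\mathbf e}$.

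To identify that image, substitute $U_N=U_0=I$ into the tangent formula \eqref{eq:general-tangent-formula} from the proof of \Cref{prop:intro-orbit}. The layer components become $-\Lambda a_{N-1}$, then $a_p\Lambda-\Lambda a_{p-1}$ for $2\le p\le N-1$, and finally $a_1\Lambda$. These are exactly the components of $\mathbf c(\mathbf a)$ in \eqref{eq:diagonal-tangent-formula}.

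For completeness, the first component follows from differentiating $\tau\mapsto\Lambda\exp(\tau a_{N-1})^*=\Lambda\exp(-\tau a_{N-1})$ at $\tau=0$. The middle components follow from differentiating $\exp(\tau a_p)\Lambda\exp(-\tau a_{p-1})$, which gives $a_p\Lambda-\Lambda a_{p-1}$ by the product rule.

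There is no real obstacle. The only point to check is that the stated parametrization applies with the identity SVD, which holds because $\Sigma$ has strictly decreasing positive entries.
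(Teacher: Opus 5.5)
Your proposal is correct and matches the paper's proof. Like the paper, you apply \Cref{prop:intro-orbit} to \(X=\Sigma\) with \(U_N=U_0=\Id\), use that the differential of the diffeomorphism \(\yy_\Sigma\) at the identity is an isomorphism onto \(T_{\mathbf C}\OO_\Sigma^\beta\), and read off the components by substituting into \eqref{eq:general-tangent-formula}.
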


\begin{proof}
Apply \Cref{prop:intro-orbit} to \(X=\Sigma\), so that \(U_N=U_0=\Id\) in
\eqref{eq:general-tangent-formula}. Since \(\yy_{\Sigma}\) is a diffeomorphism,
its differential at the identity is an isomorphism from \(\mathfrak k_{\beta}^{N-1}\) onto
\(T_{\mathbf C}\OO_{\Sigma}^{\beta}\).
\end{proof}

The map \(\yy_X\) parametrizes the compact orbit \(\OO_X^{\beta}\) inside the fixed
fibre \(\mathcal F_X^{\beta}\). Accordingly,
\eqref{eq:diagonal-tangent-formula} describes \(T_{\mathbf C}\OO_{\Sigma}^{\beta}\), and not
\(T_{\mathbf C}\MM^{\beta}\). Tangent directions along the full balanced manifold also include
directions in which the end-to-end matrix varies, equivalently variations of the
singular values and the endpoint singular-vector factors. For the analogous
separation between orbit directions in one fibre and directions in which the
end-to-end matrix varies in the complex DLN, see \cite[Chapter~5]{Kotwal2026}.

\section{The induced metric on the orbit}\label{sec:vertical}

By \Cref{prop:unitary-invariance}, the entropy depends only on the singular values of $X$. Hence, from now on we work at the diagonal point
\[
\mathbf C=(\Lambda,\dots,\Lambda)\in \OO_{\Sigma}^{\beta}.
\]
Write \(\Lambda=\diag(\lambda_1,\ldots,\lambda_d)\), so that
\(\lambda_s=\sigma_s^{1/N}\).
The tangent space at $\mathbf C$ is described by \Cref{cor:tangent-space-diagonal}. The next step is to choose an explicit real orthonormal basis of $\mathfrak{k}_{\beta}$ and compute the induced metric on the corresponding orbit directions.

\subsection*{A real orthonormal basis of the Lie algebra}

Set
\[
B_\beta=
\begin{cases}
\set{1}, & \F=\R,\\
\set{1,i}, & \F=\C,\\
\set{1,i,j,k}, & \F=\HH,
\end{cases}
\qquad
I_\beta=B_\beta\setminus\set{1}.
\]
Then $|B_\beta|=\beta$ and $|I_\beta|=\beta-1$. The set $B_\beta$ is an orthonormal basis of $\F$ over $\R$, so for $\eps,\eta\in B_\beta$ one has
\begin{equation}\label{eq:basis-units-orthonormal}
\re(\overline{\eps}\,\eta)=\delta_{\eps\eta}.
\end{equation}
For $1\le s<l\le d$ and $\eps\in B_\beta$, define
\begin{equation}\label{eq:offdiag-basis}
T_{sl}^{\eps}:=\frac{1}{\sqrt 2}\bigl(\eps E_{sl}-\overline{\eps}\,E_{ls}\bigr),
\end{equation}
and for $1\le s\le d$ and $\eta\in I_\beta$, define
\begin{equation}\label{eq:diag-basis}
D_s^{\eta}:=\eta E_{ss}.
\end{equation}
Each of these matrices is skew-Hermitian, hence lies in $\mathfrak{k}_{\beta}$. For $\F=\R$, only the matrices $T_{sl}^{1}$ occur. The matrices $D_s^{\eta}$ come from the imaginary diagonal part of $\mathfrak{k}_{\beta}$ and appear only in the complex and quaternionic cases.

\begin{lemma}\label{lem:lie-basis}
The set
\[
\set{T_{sl}^{\eps}:1\le s<l\le d,\ \eps\in B_\beta}
\cup
\set{D_s^{\eta}:1\le s\le d,\ \eta\in I_\beta}
\]
is a real orthonormal basis of $\mathfrak{k}_{\beta}$ for the inner product \eqref{eq:lie-inner-product}.
\end{lemma}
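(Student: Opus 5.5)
The plan has three steps: check that every listed matrix lies in \(\mathfrak{k}_\beta\), compute all pairwise inner products \eqref{eq:lie-inner-product} directly, and finish with a dimension count. Membership in \(\mathfrak{k}_\beta\) is the observation already made after \eqref{eq:diag-basis}. For \(T_{sl}^{\eps}\), note that \((\eps E_{sl})^*=\overline{\eps}E_{ls}\). For \(D_s^\eta\), we have \(\overline{\eta}=-\eta\) when \(\eta\in I_\beta\).

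For the inner products, I will use the elementary identity
\[
\re\tr\bigl((\alpha E_{ab})^*(\gamma E_{cd})\bigr)=\re\bigl(\overline{\alpha}\gamma\bigr)\,\delta_{ac}\delta_{bd},
\]
valid for \(\alpha,\gamma\in\F\). It holds because \((\alpha E_{ab})^*=\overline{\alpha}E_{ba}\), and \(E_{ba}E_{cd}=\delta_{ac}E_{bd}\) has trace \(\delta_{ac}\delta_{bd}\). This needs no commutativity, so it covers \(\HH\) as well.

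Expanding bilinearly, the inner product \(\inner{T_{sl}^\eps}{T_{s'l'}^{\eta}}\) vanishes unless \((s,l)=(s',l')\), since \(s<l\) and \(s'<l'\) forbid the crossed index matches. When the indices agree it equals
\[
\tfrac12\bigl(\re(\overline{\eps}\eta)+\re(\eps\overline{\eta})\bigr)=\re(\overline{\eps}\eta)=\delta_{\eps\eta},
\]
using \(\re(\overline{x})=\re(x)\) and \eqref{eq:basis-units-orthonormal}. Any \(T\) is orthogonal to any \(D\), because \(T\) is supported off the diagonal and \(D\) on it. Finally, \(\inner{D_s^\eta}{D_{s'}^{\eta'}}=\delta_{ss'}\re(\overline{\eta}\eta')=\delta_{ss'}\delta_{\eta\eta'}\).

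Orthonormality gives linear independence, so it remains only to show spanning. I will do this by counting dimensions. A skew-Hermitian matrix is determined freely by its strictly upper-triangular entries, each in \(\F\), together with purely imaginary diagonal entries. Hence
\[
\dim_\R\mathfrak{k}_\beta=\beta\binom d2+(\beta-1)d.
\]
This matches the number of listed elements, which is \(\binom d2|B_\beta|+d|I_\beta|\). The only point needing care is the quaternionic noncommutativity, and the identity above is designed to handle exactly that.
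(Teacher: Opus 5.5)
Your proof is correct and follows essentially the paper's route. Orthonormality comes from a direct matrix-unit computation, and spanning comes from the fact that a skew-Hermitian matrix is freely determined by its strictly upper-triangular entries and its purely imaginary diagonal; you turn this into a dimension count where the paper writes out the unique expansion. One small repair is needed over \(\HH\): \(\re(\overline{x})=\re(x)\) only turns \(\re(\eps\overline{\eta})\) into \(\re(\eta\overline{\eps})\), so to reach \(\re(\overline{\eps}\eta)\) you also need \(\re(xy)=\re(yx)\), which is the \(1\times1\) case of the cyclicity of \(\re\tr\) stated in the preliminaries.
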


\begin{proof}
Orthogonality is a direct computation with matrix units. For $q,r\in\F$,
\[
(qE_{ab})(rE_{cd})=\delta_{bc}\,qr\,E_{ad}.
\]
Together with \eqref{eq:basis-units-orthonormal}, this gives
\[
\re\tr\bigl((T_{sl}^{\eps})^*T_{mn}^{\eta}\bigr)
=\delta_{sm}\delta_{ln}\delta_{\eps\eta},
\qquad
\re\tr\bigl((D_s^{\eta})^*D_m^{\theta}\bigr)
=\delta_{sm}\delta_{\eta\theta},
\]
while
\[
\re\tr\bigl((T_{sl}^{\eps})^*D_m^{\eta}\bigr)=0.
\]
Thus the displayed family is orthonormal.

It remains to check that the family spans $\mathfrak{k}_{\beta}$. A matrix $A\in \mathfrak{k}_{\beta}$ has arbitrary purely imaginary diagonal entries and off-diagonal entries satisfying $A_{ls}=-\overline{A_{sl}}$ for $s<l$. Expanding each off-diagonal entry $A_{sl}\in \F$ in the real basis $B_\beta$ and each diagonal entry in the real basis $I_\beta$ expresses $A$ uniquely as a real linear combination of the matrices \eqref{eq:offdiag-basis} and \eqref{eq:diag-basis}. Therefore they form a real orthonormal basis.
\end{proof}

For $1\le p\le N-1$, let $\mathbf e_{p,sl}^{\eps}\in \mathfrak{k}_{\beta}^{N-1}$ denote the element whose $p$th component is $T_{sl}^{\eps}$ and whose other components are zero. Likewise, let $\mathbf f_{p,s}^{\eta}\in \mathfrak{k}_{\beta}^{N-1}$ denote the element whose $p$th component is $D_s^{\eta}$ and whose other components are zero. By \Cref{lem:lie-basis}, the family
\[
\begin{aligned}
&\set{\mathbf e_{p,sl}^{\eps}:1\le p\le N-1,\ 1\le s<l\le d,\ \eps\in B_\beta}\\
&\hspace{5em}\cup\set{\mathbf f_{p,s}^{\eta}:1\le p\le N-1,\ 1\le s\le d,\ \eta\in I_\beta}
\end{aligned}
\]
is a real orthonormal basis of $\mathfrak{k}_{\beta}^{N-1}$.

We now push this basis to the orbit through \eqref{eq:diagonal-tangent-formula}. Set
\begin{equation}\label{eq:vertical-basis-vectors}
\mathbf v_{p,sl}^{\eps}:=\mathbf c(\mathbf e_{p,sl}^{\eps}),
\qquad
\mathbf w_{p,s}^{\eta}:=\mathbf c(\mathbf f_{p,s}^{\eta}).
\end{equation}
Since $\mathbf c:\mathfrak{k}_{\beta}^{N-1}\to T_{\mathbf C}\OO_{\Sigma}^{\beta}$ is an isomorphism by \Cref{cor:tangent-space-diagonal}, these vectors form a real basis of $T_{\mathbf C}\OO_{\Sigma}^{\beta}$.

Explicitly, for $1\le r\le N$, where $r$ denotes the layer index,
\[
(\mathbf v_{p,sl}^{\eps})_r=
\begin{cases}
-\Lambda T_{sl}^{\eps}, & r=p+1,\\
T_{sl}^{\eps}\Lambda, & r=p,\\
0, & \text{otherwise},
\end{cases}
\qquad
(\mathbf w_{p,s}^{\eta})_r=
\begin{cases}
-\Lambda D_s^{\eta}, & r=p+1,\\
D_s^{\eta}\Lambda, & r=p,\\
0, & \text{otherwise}.
\end{cases}
\]

\subsection*{Inner products and tridiagonal blocks}

Let $\iota_\Sigma$ denote the Riemannian metric on $\OO_{\Sigma}^{\beta}$ induced from the ambient inner product \eqref{eq:ambient-inner-product}.

\begin{lemma}\label{lem:basic-inner-products}
Among the vectors \eqref{eq:vertical-basis-vectors}, the following inner products,
together with those obtained by interchanging the two arguments, are the only
nonzero ones:
\begin{align}
\inner{\mathbf v_{p,sl}^{\eps}}{\mathbf v_{p,sl}^{\eps}}_{\iota_\Sigma}
&=\lambda_s^2+\lambda_l^2
&& (1\le p\le N-1),
&
\inner{\mathbf v_{p,sl}^{\eps}}{\mathbf v_{p+1,sl}^{\eps}}_{\iota_\Sigma}
&=-\lambda_s\lambda_l
&& (1\le p\le N-2),
\label{eq:offdiag-inner-products}\\
\inner{\mathbf w_{p,s}^{\eta}}{\mathbf w_{p,s}^{\eta}}_{\iota_\Sigma}
&=2\lambda_s^2
&& (1\le p\le N-1),
&
\inner{\mathbf w_{p,s}^{\eta}}{\mathbf w_{p+1,s}^{\eta}}_{\iota_\Sigma}
&=-\lambda_s^2
&& (1\le p\le N-2).
\label{eq:diag-inner-products}
\end{align}
Any inner product that is neither listed in
\eqref{eq:offdiag-inner-products}--\eqref{eq:diag-inner-products}
nor obtained from one of these by interchanging the two arguments vanishes.
\end{lemma}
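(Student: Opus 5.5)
The plan is a direct computation with matrix units, organized by layers. Since the ambient inner product \eqref{eq:ambient-inner-product} is a sum over layers, and each basis vector \(\mathbf v_{p,sl}^{\eps}\) or \(\mathbf w_{p,s}^{\eta}\) is supported only on layers \(p\) and \(p+1\), two vectors indexed by \(p\) and \(p'\) can have nonzero inner product only if \(|p-p'|\le 1\). This reduces everything to two kinds of layerwise pairings.

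\emph{Same layer index} (\(p=p'\)). The inner product equals
\[
\re\tr\bigl((A\Lambda)^*B\Lambda\bigr)+\re\tr\bigl((\Lambda A)^*\Lambda B\bigr)
=\re\tr(A^*B\Lambda^2)+\re\tr(A^*\Lambda^2B),
\]
using cyclicity of \(\re\tr\). Here \(A,B\) range over the basis of \Cref{lem:lie-basis}.

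\emph{Adjacent layer indices} (\(p'=p+1\)). Only layer \(p+1\) contributes: it carries \(-\Lambda A\) from the first vector and \(B\Lambda\) from the second. The inner product is therefore \(-\re\tr(A^*\Lambda B\Lambda)\).

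It remains to evaluate these traces on the basis elements. Since \(\Lambda\) is real diagonal, it commutes with the quaternionic scalars \(\eps\), and multiplying a matrix unit by \(\Lambda\) only rescales it: \(\Lambda E_{ab}=\lambda_aE_{ab}\) and \(E_{ab}\Lambda=\lambda_bE_{ab}\). Hence \(\Lambda^2B\), \(B\Lambda^2\) and \(\Lambda B\Lambda\) have the same matrix-unit support as \(B\), with rescaled coefficients.

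For \(B=T_{mn}^{\eta}\):
\[
T_{mn}^{\eta}\Lambda^2=\tfrac{1}{\sqrt2}\bigl(\lambda_n^2\eta E_{mn}-\lambda_m^2\overline\eta E_{nm}\bigr),\qquad
\Lambda^2T_{mn}^{\eta}=\tfrac{1}{\sqrt2}\bigl(\lambda_m^2\eta E_{mn}-\lambda_n^2\overline\eta E_{nm}\bigr),
\]
\[
\Lambda T_{mn}^{\eta}\Lambda=\lambda_m\lambda_n T_{mn}^{\eta}.
\]
Pairing these against \(A=T_{sl}^{\eps}\) with the rules from the proof of \Cref{lem:lie-basis} gives:
\begin{itemize}
\item a nonzero result only when \((s,l,\eps)=(m,n,\eta)\), since disjoint matrix-unit supports or \(\re(\overline\eps\eta)=0\) kill all other terms;
\item the diagonal value \(\tfrac12(\lambda_l^2+\lambda_s^2)+\tfrac12(\lambda_s^2+\lambda_l^2)=\lambda_s^2+\lambda_l^2\);
\item the adjacent value \(-\lambda_s\lambda_l\).
\end{itemize}

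For \(D_s^{\eta}\), every rescaling factor is \(\lambda_s^2\). This gives \(2\lambda_s^2\) for the same-layer pairing and \(-\lambda_s^2\) for the adjacent pairing. Cross terms between a \(T\) and a \(D\) vanish because their supports are disjoint: off-diagonal versus diagonal entries.

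The only real care point is the quaternionic case. There one must check that rescaling by the real diagonal \(\Lambda\) commutes with the noncommutative scalars, and that \(\re\tr\) orthonormality survives after the rescaling. Both hold because \(\lambda_a\) is real and central in \(\HH\). Finally, symmetry of the inner product supplies the pairs with the arguments interchanged.
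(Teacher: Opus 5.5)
Your proposal is correct and follows essentially the same route as the paper. Both arguments first use disjoint layer supports to reduce to the same-depth and adjacent-depth cases, and then evaluate the layerwise \(\re\tr\) pairings with matrix units and \(\re(\overline{\eps}\eta)=\delta_{\eps\eta}\). The only difference is cosmetic: you gather the \(\Lambda\) factors by cyclicity (for example \(\Lambda T_{mn}^{\eta}\Lambda=\lambda_m\lambda_nT_{mn}^{\eta}\)), whereas the paper computes the three pairings of \(\Lambda T\) and \(T\Lambda\) directly.
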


\begin{proof}
Let \(p\) and \(q\) denote the depth labels of the two pushed-forward
basis vectors. The layer index itself will be denoted by \(r\). If
\(|p-q|>1\), then the two pushed-forward vectors have disjoint supports
as functions of the layer index \(r\), so their inner product is zero.
It is therefore enough to consider the cases \(q=p\) and \(q=p+1\);
the case \(q=p-1\) follows by symmetry.

For the off-diagonal vectors, a direct computation from \eqref{eq:offdiag-basis} gives
\[
\Lambda T_{sl}^{\eps}=\frac{1}{\sqrt 2}\bigl(\lambda_s\eps E_{sl}-\lambda_l\overline{\eps}\,E_{ls}\bigr),
\qquad
T_{sl}^{\eps}\Lambda=\frac{1}{\sqrt 2}\bigl(\lambda_l\eps E_{sl}-\lambda_s\overline{\eps}\,E_{ls}\bigr).
\]
Using the matrix-unit identities and \eqref{eq:basis-units-orthonormal}, we obtain
\begin{align*}
\re\tr\bigl((\Lambda T_{sl}^{\eps})^*(\Lambda T_{mn}^{\eta})\bigr)
&=\frac12(\lambda_s^2+\lambda_l^2)\,\delta_{sm}\delta_{ln}\delta_{\eps\eta},\\
\re\tr\bigl((T_{sl}^{\eps}\Lambda)^*(T_{mn}^{\eta}\Lambda)\bigr)
&=\frac12(\lambda_s^2+\lambda_l^2)\,\delta_{sm}\delta_{ln}\delta_{\eps\eta},\\
\re\tr\bigl((\Lambda T_{sl}^{\eps})^*(T_{mn}^{\eta}\Lambda)\bigr)
&=\lambda_s\lambda_l\,\delta_{sm}\delta_{ln}\delta_{\eps\eta}.
\end{align*}
Therefore, for $1\le p\le N-1$,
\[
\inner{\mathbf v_{p,sl}^{\eps}}{\mathbf v_{p,mn}^{\eta}}_{\iota_\Sigma}
=(\lambda_s^2+\lambda_l^2)\,\delta_{sm}\delta_{ln}\delta_{\eps\eta}.
\]
For adjacent layers, with $1\le p\le N-2$, the only common layer is $p+1$, and the sign comes from the component $-\Lambda T_{sl}^{\eps}$ in $\mathbf v_{p,sl}^{\eps}$. Hence
\[
\inner{\mathbf v_{p,sl}^{\eps}}{\mathbf v_{p+1,mn}^{\eta}}_{\iota_\Sigma}
=
-\re\tr\bigl((\Lambda T_{sl}^{\eps})^*(T_{mn}^{\eta}\Lambda)\bigr)
=
-\lambda_s\lambda_l\,\delta_{sm}\delta_{ln}\delta_{\eps\eta}.
\]
This proves \eqref{eq:offdiag-inner-products} and shows that distinct pairs $(s,l)$ or distinct basis units are orthogonal.

For the imaginary diagonal vectors,
\[
\Lambda D_s^{\eta}=D_s^{\eta}\Lambda=\lambda_s\eta E_{ss}.
\]
Thus
\begin{align*}
\re\tr\bigl((\Lambda D_s^{\eta})^*(\Lambda D_m^{\theta})\bigr)
&=\lambda_s^2\delta_{sm}\delta_{\eta\theta},\\
\re\tr\bigl((D_s^{\eta}\Lambda)^*(D_m^{\theta}\Lambda)\bigr)
&=\lambda_s^2\delta_{sm}\delta_{\eta\theta},\\
\re\tr\bigl((\Lambda D_s^{\eta})^*(D_m^{\theta}\Lambda)\bigr)
&=\lambda_s^2\delta_{sm}\delta_{\eta\theta}.
\end{align*}
Therefore, for $1\le p\le N-1$,
\[
\inner{\mathbf w_{p,s}^{\eta}}{\mathbf w_{p,m}^{\theta}}_{\iota_\Sigma}
=2\lambda_s^2\delta_{sm}\delta_{\eta\theta}.
\]
For adjacent layers, with $1\le p\le N-2$, the only common layer is $p+1$, and the sign again comes from the component $-\Lambda D_s^{\eta}$ in $\mathbf w_{p,s}^{\eta}$. Hence
\[
\inner{\mathbf w_{p,s}^{\eta}}{\mathbf w_{p+1,m}^{\theta}}_{\iota_\Sigma}
=
-\re\tr\bigl((\Lambda D_s^{\eta})^*(D_m^{\theta}\Lambda)\bigr)
=
-\lambda_s^2\delta_{sm}\delta_{\eta\theta}.
\]
This proves \eqref{eq:diag-inner-products}. Every pairing between an off-diagonal vector and a diagonal vector vanishes because the relevant products of matrix units have zero trace.
\end{proof}

Introduce the standard tridiagonal matrix
\begin{equation}\label{eq:tridiagonal-L}
L_{N-1}:=
\begin{pmatrix}
2&-1&&&0\\
-1&2&-1&&\\
&\ddots&\ddots&\ddots&\\
&&-1&2&-1\\
0&&&-1&2
\end{pmatrix}\in \Md{N-1}(\R).
\end{equation}
For $1\le s<l\le d$, define
\begin{equation}\label{eq:offdiag-block}
H_{sl}:=(\lambda_s-\lambda_l)^2\Id_{N-1}+\lambda_s\lambda_lL_{N-1}.
\end{equation}
Then $H_{sl}$ is the $(N-1)\times(N-1)$ tridiagonal matrix with diagonal entries $\lambda_s^2+\lambda_l^2$ and nearest off-diagonal entries $-\lambda_s\lambda_l$.

Let $G_{\Sigma,\beta}$ be the matrix of inner products of $\iota_\Sigma$ on
$T_{\mathbf C}\OO_{\Sigma}^{\beta}$ in the ordered pushed-forward basis obtained as follows:
group together the vectors $\mathbf v_{p,sl}^{\eps}$ with fixed
$(s,l,\eps)$ and order them by depth $p=1,\dots,N-1$, and group together the
vectors $\mathbf w_{p,s}^{\eta}$ with fixed $(s,\eta)$ and again order them by
depth.

\begin{proposition}\label{prop:gram-blocks}
The matrix $G_{\Sigma,\beta}$ is block diagonal. More precisely,
\begin{itemize}[leftmargin=2em]
    \item for each pair $1\le s<l\le d$ and each $\eps\in B_\beta$, the corresponding block is $H_{sl}$;
    \item for each $1\le s\le d$ and each $\eta\in I_\beta$, the corresponding block is $\lambda_s^2L_{N-1}$.
\end{itemize}
In particular,
\begin{equation}\label{eq:gram-determinant-factorization}
\det G_{\Sigma,\beta}
=\prod_{1\le s<l\le d}\det(H_{sl})^{\beta}
\prod_{s=1}^d\det(\lambda_s^2L_{N-1})^{\beta-1}.
\end{equation}
\end{proposition}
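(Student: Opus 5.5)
The plan is to read the block structure of $G_{\Sigma,\beta}$ directly off \Cref{lem:basic-inner-products}. That lemma already says that every pairing vanishes except those between vectors with the same label. For the off-diagonal vectors the label is $(s,l,\eps)$; for the diagonal vectors it is $(s,\eta)$. In either case the two depths must also be equal or adjacent. Vectors with distinct labels are therefore orthogonal, and we order the pushed-forward basis so that each label occupies a consecutive run of indices $p=1,\dots,N-1$. With this ordering the Gram matrix is block diagonal, with one block per label. The only point to check is that the lemma's vanishing statement covers all cross pairings: $\mathbf v$ against $\mathbf w$, different $(s,l)$, different units $\eps\ne\eta$, and different $s$ or $\eta$ among the diagonal vectors. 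The statement of \Cref{lem:basic-inner-products} includes all of these.

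Next, I identify each block. For a fixed label $(s,l,\eps)$, the lemma gives diagonal entries $\lambda_s^2+\lambda_l^2$ and entries $-\lambda_s\lambda_l$ at positions $(p,p+1)$ and $(p+1,p)$, with zeros elsewhere. The definition \eqref{eq:offdiag-block} gives a matrix with diagonal $(\lambda_s-\lambda_l)^2+2\lambda_s\lambda_l=\lambda_s^2+\lambda_l^2$ and off-diagonal $-\lambda_s\lambda_l$, so this block is exactly $H_{sl}$. For a fixed label $(s,\eta)$, the diagonal entries are $2\lambda_s^2$ and the adjacent entries are $-\lambda_s^2$, which is $\lambda_s^2 L_{N-1}$ by \eqref{eq:tridiagonal-L}.

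Finally, the determinant of a block diagonal matrix is the product of the determinants of its blocks. I count multiplicities using $|B_\beta|=\beta$ and $|I_\beta|=\beta-1$. For each $s<l$ there are $\beta$ blocks equal to $H_{sl}$, and for each $s$ there are $\beta-1$ blocks equal to $\lambda_s^2L_{N-1}$. This gives \eqref{eq:gram-determinant-factorization}. When $\beta=1$ there are no diagonal blocks, and this agrees with the exponent $0$.

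No step is a real obstacle. The only care needed is to state clearly that a simultaneous permutation of the basis does not change the determinant. This holds because $P^{\mathsf T}GP$ has the same determinant as $G$ for any permutation matrix $P$.
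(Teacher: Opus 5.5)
Your proposal is correct and follows essentially the same route as the paper's proof. You use the orthogonality of distinct labels from \Cref{lem:basic-inner-products}, identify the tridiagonal blocks as $H_{sl}$ and $\lambda_s^2L_{N-1}$, and count multiplicities via $|B_\beta|=\beta$ and $|I_\beta|=\beta-1$; your permutation remark is harmless but unnecessary, since $G_{\Sigma,\beta}$ is defined with the basis already grouped by label and ordered by depth.
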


\begin{proof}
By \Cref{lem:basic-inner-products}, vectors with different labels are orthogonal. For fixed $(s,l,\eps)$, the vectors
\[
\mathbf v_{1,sl}^{\eps},\dots,\mathbf v_{N-1,sl}^{\eps}
\]
have the tridiagonal inner products described in \eqref{eq:offdiag-inner-products}, so their block is exactly $H_{sl}$. For fixed $(s,\eta)$, the vectors
\[
\mathbf w_{1,s}^{\eta},\dots,\mathbf w_{N-1,s}^{\eta}
\]
have the tridiagonal inner products described in \eqref{eq:diag-inner-products}, so their block is $\lambda_s^2L_{N-1}$. This proves the block decomposition.

The multiplicities are immediate: there are $|B_\beta|=\beta$ off-diagonal copies for each pair $1\le s<l\le d$ and $|I_\beta|=\beta-1$ diagonal copies for each $1\le s\le d$. Taking determinants of the blocks gives \eqref{eq:gram-determinant-factorization}.
\end{proof}

For $\F=\R$, only one copy of each off-diagonal block $H_{sl}$ appears. For $\F=\C$ or $\HH$, the off-diagonal blocks occur with multiplicity $\beta$, and the imaginary diagonal directions contribute the additional $\beta-1$ copies of the blocks $\lambda_s^2L_{N-1}$.
\section{Entropy and a determinant identity}\label{sec:entropy}
\subsection*{Determinants of the tridiagonal blocks}

\Cref{prop:gram-blocks} reduces the volume computation to the
determinants of the tridiagonal blocks \(L_{N-1}\) and \(H_{sl}\). We now evaluate
these determinants.

\begin{lemma}\label{lem:det-laplacian}
For every \(n\ge 1\),
\[
\det(L_n)=n+1,
\]
where \(L_n\) is the \(n\times n\) tridiagonal matrix with diagonal entries \(2\)
and nearest off-diagonal entries \(-1\).
\end{lemma}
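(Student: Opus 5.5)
The plan is to prove $\det(L_n)=n+1$ by induction on $n$, using the standard three-term recurrence for determinants of tridiagonal matrices. Write $D_n:=\det(L_n)$ and set $D_0:=1$, the determinant of the empty matrix. The base cases can be checked directly: $D_1=2$, and $D_2=\det\begin{pmatrix}2&-1\\-1&2\end{pmatrix}=3$.

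For the inductive step with $n\ge 2$, I will expand $\det(L_n)$ along its last row. That row has only two nonzero entries: the diagonal entry $2$ and the subdiagonal entry $-1$.

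The diagonal entry contributes $2D_{n-1}$, since its minor is $L_{n-1}$.

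The subdiagonal entry sits in position $(n,n-1)$. Its minor is obtained by deleting row $n$ and column $n-1$, which leaves a matrix whose last column has a single nonzero entry, $-1$, in position $(n-1,n-1)$. Expanding that minor along its last column gives $(-1)\cdot D_{n-2}$. Tracking the cofactor signs $(-1)^{n+(n-1)}$ and $(-1)^{(n-1)+(n-1)}$, the total contribution of the subdiagonal entry is $(-1)\cdot(-1)^{2n-1}\cdot(-1)\cdot D_{n-2}=-D_{n-2}$.

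This yields the recurrence
\[
D_n=2D_{n-1}-D_{n-2},\qquad n\ge 2.
\]
Under the induction hypothesis $D_{n-1}=n$ and $D_{n-2}=n-1$, it gives $D_n=2n-(n-1)=n+1$, which is the claim.

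The only point that needs care is the sign bookkeeping in the cofactor expansion. A quick check is to compare with the general identity $\det T_n=a_nD_{n-1}-b_{n-1}c_{n-1}D_{n-2}$ for a tridiagonal matrix with diagonal $a$, superdiagonal $b$ and subdiagonal $c$. Here $b_{n-1}c_{n-1}=(-1)(-1)=1$, so the identity gives exactly the recurrence above.

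As an independent cross-check, one can use the known spectrum of $L_n$: its eigenvalues are $2-2\cos\bigl(k\pi/(n+1)\bigr)=4\sin^2\bigl(k\pi/(2(n+1))\bigr)$ for $k=1,\dots,n$. The product $\prod_{k=1}^n 4\sin^2\bigl(k\pi/(2n+2)\bigr)$ equals $n+1$ by the classical sine product formula. The induction argument is simpler, however, and that is the one I will present.
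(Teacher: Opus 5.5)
Your proof is correct and follows essentially the same route as the paper. Both derive the recurrence $D_n=2D_{n-1}-D_{n-2}$ by cofactor expansion and conclude by induction from the initial values; you expand along the last row, the paper along the first, and your sign bookkeeping and the convention $D_0=1$ both check out.
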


\begin{proof}
Set \(\ell_n=\det(L_n)\). Expanding along the first row gives the recurrence
\[
\ell_n=2\ell_{n-1}-\ell_{n-2},
\qquad n\ge 3,
\]
with initial values \(\ell_1=2\) and \(\ell_2=3\). The sequence
\(\ell_n=n+1\) satisfies the same recurrence and initial conditions, so
\(\ell_n=n+1\) for all \(n\).
\end{proof}

\begin{proposition}\label{prop:det-offdiag-block}
For \(1\le s<l\le d\),
\begin{equation}\label{eq:offdiag-determinant-sum}
\det(H_{sl})
=
\sum_{m=0}^{N-1}
\lambda_s^{2(N-1-m)}\lambda_l^{2m}.
\end{equation}
In particular, if \(\lambda_s\ne \lambda_l\), then
\begin{equation}\label{eq:offdiag-determinant}
\det(H_{sl})
=
\frac{\lambda_s^{2N}-\lambda_l^{2N}}{\lambda_s^2-\lambda_l^2}
=
\frac{\sigma_s^2-\sigma_l^2}
     {\sigma_s^{2/N}-\sigma_l^{2/N}}.
\end{equation}
Moreover,
\begin{equation}\label{eq:diag-determinant}
\det(\lambda_s^2L_{N-1})=N\lambda_s^{2(N-1)}=N\sigma_s^{2-2/N}.
\end{equation}
\end{proposition}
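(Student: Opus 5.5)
We prove \eqref{eq:offdiag-determinant-sum} by a three-term recurrence for the leading principal minors of a general tridiagonal Toeplitz matrix, in the same way as \Cref{lem:det-laplacian}. Fix \(1\le s<l\le d\) and write \(a=\lambda_s^2\), \(b=\lambda_l^2\). By \eqref{eq:offdiag-block}, the \(n\times n\) analogue \(H^{(n)}\) of \(H_{sl}\) is tridiagonal, with diagonal entries \(a+b\) and nearest off-diagonal entries \(-\sqrt{ab}\). Set \(h_n:=\det H^{(n)}\), with the conventions \(h_0=1\) and \(h_{-1}=0\). Expanding along the first row gives
\[
h_n=(a+b)h_{n-1}-ab\,h_{n-2},\qquad n\ge 1 .
\]
The product of the two off-diagonal entries is \((-\sqrt{ab})^2=ab\), so only \(ab\) appears and no square roots survive.

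The candidate closed form is the complete homogeneous sum \(e_n:=\sum_{m=0}^{n}a^{n-m}b^m\). It satisfies \(e_0=1\) and \(e_1=a+b\), matching \(h_0\) and \(h_1\). It also satisfies the same recurrence, which we check directly. Write \((a+b)e_{n-1}=a e_{n-1}+b e_{n-1}\). Then \(a e_{n-1}=a^{n}+b\,e_{n-2}\cdot a\), that is, \(a e_{n-1}=a^n+ab\,e_{n-2}\cdot(\text{reindex})\). More cleanly, use the identity \(e_n=a e_{n-1}+b^n\) together with \(b e_{n-1}=e_n-a^n\) and \(ab\,e_{n-2}=a(e_{n-1}-b^{n-1})\cdot\) appropriately. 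The quickest route is to verify \(e_n=(a+b)e_{n-1}-ab\,e_{n-2}\) on the generating function: \(\sum_n e_n t^n=1/((1-at)(1-bt))\), whose denominator \(1-(a+b)t+abt^2\) is exactly the characteristic polynomial of the recurrence. By induction \(h_n=e_n\). Taking \(n=N-1\) gives \eqref{eq:offdiag-determinant-sum} with \(a=\lambda_s^2\) and \(b=\lambda_l^2\).

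For \eqref{eq:offdiag-determinant}, when \(a\ne b\) the geometric-sum identity \eqref{eq:intro-finite-sum-extension} gives \(e_{N-1}=(a^N-b^N)/(a-b)\). Substituting \(\lambda_s^{2N}=\sigma_s^2\) and \(\lambda_s^2=\sigma_s^{2/N}\) (recall \(\lambda_s=\sigma_s^{1/N}\)) yields the stated ratio. Finally, \eqref{eq:diag-determinant} follows by scaling an \((N-1)\times(N-1)\) matrix: \(\det(\lambda_s^2L_{N-1})=\lambda_s^{2(N-1)}\det L_{N-1}\), and \(\det L_{N-1}=N\) by \Cref{lem:det-laplacian}. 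Then \(\lambda_s^{2(N-1)}=\sigma_s^{2-2/N}\). As a consistency check, setting \(a=b\) in \(e_{N-1}\) gives \(Na^{N-1}\), in agreement with this formula.

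There is no real obstacle. The only step needing care is the index bookkeeping in the recurrence: the edge cases \(N=2\), where \(H_{sl}\) is the \(1\times1\) matrix \((a+b)\), and \(N=3\) must be covered by the initial conditions. The conventions \(h_0=1\) and \(h_{-1}=0\) handle both uniformly.
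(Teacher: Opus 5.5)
Your proof is correct and follows the paper's argument. Both use the first-row recurrence \(h_n=(a+b)h_{n-1}-ab\,h_{n-2}\), match it with the complete homogeneous sum \(e_n\), apply the geometric-sum identity, and scale \(L_{N-1}\) using \Cref{lem:det-laplacian}. The paper merely asserts that the sum satisfies the recurrence, whereas your generating-function check actually justifies it; delete the abandoned manipulations before that check, or replace them with the direct combination of \(e_n=a^n+b\,e_{n-1}\) and \(a\,e_{n-1}=a^n+ab\,e_{n-2}\).
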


\begin{proof}
Equation~\eqref{eq:diag-determinant} follows immediately from
\Cref{lem:det-laplacian}:
\[
\det(\lambda_s^2L_{N-1})
=
\lambda_s^{2(N-1)}\det(L_{N-1})
=
N\lambda_s^{2(N-1)}.
\]
Since \(\lambda_s^N=\sigma_s\), this is exactly
\eqref{eq:diag-determinant}.
For the block \(H_{sl}\), let \(D_n\) be the determinant of the
\(n\times n\) tridiagonal Toeplitz matrix with diagonal entry
\(\lambda_s^2+\lambda_l^2\) and nearest off-diagonal entry
\(-\lambda_s\lambda_l\). Then \(D_{N-1}=\det(H_{sl})\). Expanding along the
first row gives
\[
D_n
=
(\lambda_s^2+\lambda_l^2)D_{n-1}
-\lambda_s^2\lambda_l^2D_{n-2},
\qquad n\ge 2,
\]
with initial conditions \(D_0=1\) and
\(D_1=\lambda_s^2+\lambda_l^2\).

Set
\[
S_n=\sum_{m=0}^{n}\lambda_s^{2(n-m)}\lambda_l^{2m}.
\]
The sequence \(S_n\) has the same recurrence and the same initial values as
\(D_n\). Hence \(D_n=S_n\) for all \(n\), and setting \(n=N-1\) gives
\eqref{eq:offdiag-determinant-sum}. If \(\lambda_s\ne\lambda_l\), the finite
geometric sum gives
\[
\sum_{m=0}^{N-1}
\lambda_s^{2(N-1-m)}\lambda_l^{2m}
=
\frac{\lambda_s^{2N}-\lambda_l^{2N}}
     {\lambda_s^2-\lambda_l^2}.
\]
Using \(\lambda_r^N=\sigma_r\) gives the second quotient in
\eqref{eq:offdiag-determinant}.
\end{proof}

\begin{corollary}\label{cor:total-gram-determinant}
Let \(G_{\Sigma,\beta}\) be the matrix of inner products of the induced metric on
\(T_{\mathbf C}\OO_{\Sigma}^{\beta}\) in the pushed-forward basis ordered as in
\Cref{prop:gram-blocks}. Assume \(\sigma_1>\cdots>\sigma_d>0\). Then
\begin{equation}\label{eq:total-gram-determinant-vandermonde}
\det G_{\Sigma,\beta}
=
N^{(\beta-1)d}
(\det\Sigma)^{2(\beta-1)(1-1/N)}
\left(
\frac{\van(\Sigma^2)}{\van(\Sigma^{2/N})}
\right)^{\beta}.
\end{equation}
\end{corollary}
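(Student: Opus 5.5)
The plan is to substitute the block determinants of \Cref{prop:det-offdiag-block} into the factorization \eqref{eq:gram-determinant-factorization} of \Cref{prop:gram-blocks} and then regroup the products as determinants and Vandermonde determinants of powers of \(\Sigma\). Everything needed is already in place. The only care required is in tracking exponents and in checking that the Vandermonde conventions match, since \(\van\) is defined for nonincreasing diagonal entries.

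First, for the imaginary diagonal blocks, \eqref{eq:diag-determinant} gives \(\det(\lambda_s^2L_{N-1})=N\sigma_s^{2-2/N}\). Taking the product over \(1\le s\le d\) and raising to the power \(\beta-1\) yields
\[
\prod_{s=1}^d\det(\lambda_s^2L_{N-1})^{\beta-1}
=N^{(\beta-1)d}\Bigl(\prod_{s=1}^d\sigma_s\Bigr)^{(2-2/N)(\beta-1)}
=N^{(\beta-1)d}(\det\Sigma)^{2(\beta-1)(1-1/N)}.
\]

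Second, for the off-diagonal blocks, the hypothesis \(\sigma_1>\cdots>\sigma_d>0\) ensures \(\lambda_s\ne\lambda_l\) for \(s<l\). This lets me use the quotient form \eqref{eq:offdiag-determinant}:
\[
\prod_{s<l}\det(H_{sl})^{\beta}
=\Bigl(\frac{\prod_{s<l}(\sigma_s^2-\sigma_l^2)}{\prod_{s<l}(\sigma_s^{2/N}-\sigma_l^{2/N})}\Bigr)^{\beta}.
\]
Since \(t\mapsto t^2\) and \(t\mapsto t^{2/N}\) are increasing on \((0,\infty)\), the diagonal entries of \(\Sigma^2\) and \(\Sigma^{2/N}\) remain strictly decreasing. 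The numerator and denominator are therefore exactly \(\van(\Sigma^2)\) and \(\van(\Sigma^{2/N})\) in the paper's convention, and both are positive. Multiplying the two contributions gives \eqref{eq:total-gram-determinant-vandermonde}.

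I do not expect a genuine obstacle here. The main points to get right are the exponent bookkeeping \((\beta-1)(2-2/N)=2(\beta-1)(1-1/N)\) and the ordering convention for \(\van\), which is what guarantees that no sign ambiguities arise.
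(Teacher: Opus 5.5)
Your proposal is correct and follows essentially the same route as the paper: you substitute the block determinants from \Cref{prop:det-offdiag-block} into the factorization \eqref{eq:gram-determinant-factorization} and regroup. The only cosmetic differences are that you write the diagonal-block contribution directly in terms of \(\sigma_s\) rather than via \((\det\Lambda)^{2(\beta-1)(N-1)}\), and you state the ordering check for \(\van\) explicitly, whereas the paper leaves it implicit.
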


\begin{proof}
Insert \eqref{eq:offdiag-determinant} and \eqref{eq:diag-determinant} into the
block product formula \eqref{eq:gram-determinant-factorization}. The diagonal
blocks contribute
\[
\prod_{s=1}^d \det(\lambda_s^2L_{N-1})^{\beta-1}
=
\prod_{s=1}^d
\left(N\lambda_s^{2(N-1)}\right)^{\beta-1}
=
N^{(\beta-1)d}(\det\Lambda)^{2(\beta-1)(N-1)}.
\]
The off-diagonal blocks contribute
\[
\prod_{1\le s<l\le d}\det(H_{sl})^{\beta}
=
\prod_{1\le s<l\le d}
\left(
\frac{\sigma_s^2-\sigma_l^2}
     {\sigma_s^{2/N}-\sigma_l^{2/N}}
\right)^{\beta}.
\]
Multiplying the two contributions gives
\[
\det G_{\Sigma,\beta}
=
N^{(\beta-1)d}
(\det\Lambda)^{2(\beta-1)(N-1)}
\prod_{1\le s<l\le d}
\left(
\frac{\sigma_s^2-\sigma_l^2}
     {\sigma_s^{2/N}-\sigma_l^{2/N}}
\right)^{\beta}.
\]
Finally,
\[
(\det\Lambda)^{2(\beta-1)(N-1)}
=
(\det\Sigma)^{2(\beta-1)(1-1/N)}
\]
and
\[
\prod_{1\le s<l\le d}
\left(
\frac{\sigma_s^2-\sigma_l^2}
     {\sigma_s^{2/N}-\sigma_l^{2/N}}
\right)^{\beta}
=
\left(
\frac{\van(\Sigma^2)}{\van(\Sigma^{2/N})}
\right)^{\beta}.
\]
Thus \eqref{eq:total-gram-determinant-vandermonde} follows.
\end{proof}

\subsection*{Volume and entropy}

We now pull the induced metric on the orbit back to \(K_\beta^{N-1}\) by the
orbit map.

\begin{proposition}\label{prop:orbit-volume-pullback}
Assume \(X\in \mathrm{GL}_d(\F)\) has distinct singular values. Let \(\iota_X\)
denote the metric on \(\OO_X^\beta\) induced by the ambient metric, and set
\[
\gamma_X:=\yy_X^*(\iota_X),
\]
where \(\yy_X\) is the orbit map of \Cref{prop:intro-orbit}. Then
\(\gamma_X\) is left-invariant on \(K_\beta^{N-1}\). Consequently,
\begin{equation}\label{eq:orbit-volume-from-gram}
\operatorname{vol}(\OO_X^\beta)
=
c_\beta^{N-1}\det(G_{X,\beta})^{1/2},
\end{equation}
where \(G_{X,\beta}\) is the matrix of \(\gamma_X\) at the identity in any
orthonormal basis of \(\mathfrak{k}_\beta^{N-1}\).
\end{proposition}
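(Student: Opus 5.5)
The plan is to obtain left-invariance from the equivariance identity for the orbit map and then integrate a constant density. The proof of \Cref{prop:intro-orbit} recorded
\[
\yy_X(\mathbf V\mathbf U)=\mathbf V\cdot\yy_X(\mathbf U),\qquad \mathbf V,\mathbf U\in K_\beta^{N-1},
\]
where the right-hand side uses the action \eqref{eq:compact-action}. In other words, $\yy_X\circ L_{\mathbf V}=\mu_{\mathbf V}\circ\yy_X$, with $L_{\mathbf V}$ left multiplication on $K_\beta^{N-1}$ and $\mu_{\mathbf V}$ the action map on $\Md{d}(\F)^N$.

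Step 1 is to note that $\mu_{\mathbf V}$ is a linear isometry of \eqref{eq:ambient-inner-product}, already observed in \Cref{sec:setup}, and that it preserves $\OO_X^\beta$. Hence $\mu_{\mathbf V}$ restricts to an isometry of $(\OO_X^\beta,\iota_X)$, and $\mu_{\mathbf V}^*\iota_X=\iota_X$.

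Step 2 then gives
\[
L_{\mathbf V}^*\gamma_X=L_{\mathbf V}^*\yy_X^*\iota_X=(\yy_X\circ L_{\mathbf V})^*\iota_X=(\mu_{\mathbf V}\circ\yy_X)^*\iota_X=\yy_X^*\mu_{\mathbf V}^*\iota_X=\gamma_X,
\]
so $\gamma_X$ is left-invariant.

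For the volume, $\yy_X$ is a diffeomorphism onto $\OO_X^\beta$ by \Cref{prop:intro-orbit}, and it is an isometry from $(K_\beta^{N-1},\gamma_X)$ onto $(\OO_X^\beta,\iota_X)$ by construction. Therefore $\vol(\OO_X^\beta)$ equals the $\gamma_X$-volume of $K_\beta^{N-1}$. Let $\gamma_0$ be the product bi-invariant metric on $K_\beta^{N-1}$ whose value at the identity is the inner product \eqref{eq:lie-inner-product} on each factor. This is the standard bi-invariant metric with respect to which $c_\beta$ is defined, so $K_\beta^{N-1}$ has $\gamma_0$-volume $c_\beta^{N-1}$.

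Step 3 compares the two Riemannian volume forms. Both $\gamma_X$ and $\gamma_0$ are left-invariant, so their volume densities are left-invariant densities. Left-invariant densities on a Lie group are determined by their value at the identity, so they differ by a constant factor. At the identity, take an orthonormal basis of $\mathfrak k_\beta^{N-1}$ for $\gamma_0$, for instance the one built from \Cref{lem:lie-basis}, and let $G_{X,\beta}$ be the Gram matrix of $\gamma_X$ in that basis. The ratio of densities is then $\det(G_{X,\beta})^{1/2}$. This ratio does not depend on which orthonormal basis is chosen, because changing basis conjugates $G_{X,\beta}$ by an orthogonal matrix and leaves its determinant unchanged. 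Integrating the constant ratio gives \eqref{eq:orbit-volume-from-gram}.

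The only delicate points are bookkeeping. First, one must confirm that the normalization of the "standard bi-invariant metric" defining $c_\beta$ really matches \eqref{eq:lie-inner-product}; the formula is stated with that convention. Second, $K_\beta=\Orth_d$ is disconnected, but left translation identifies the components and the density argument is global, so this causes no trouble.

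Finally, I would remark that at $X=\Sigma$, $G_{X,\beta}$ is exactly $G_{\Sigma,\beta}$ from \Cref{prop:gram-blocks}, since $\mathbf c=(\dd\yy_\Sigma)_{\mathbf e}$. For general $X$, \Cref{prop:unitary-invariance} reduces the computation to this case.
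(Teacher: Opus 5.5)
Your proposal is correct and follows the paper's proof essentially step for step. You obtain left-invariance of $\gamma_X$ from the equivariance $\yy_X(\mathbf V\mathbf U)=\mathbf V\cdot\yy_X(\mathbf U)$ together with the isometric $K_\beta^{N-1}$-action on $\OO_X^\beta$, and then compare with the product reference metric from \eqref{eq:lie-inner-product} to get the constant density ratio $\det(G_{X,\beta})^{1/2}$; your extra remarks (basis-independence of the determinant, disconnectedness of $\Orth_d$, matching the normalization of $c_\beta$) are sound bookkeeping that the paper leaves implicit.
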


\begin{proof}
For \(\mathbf{V},\mathbf{U}\in K_\beta^{N-1}\), the orbit map satisfies
\(
\yy_X(\mathbf{V}\mathbf{U})=\mathbf{V}\cdot \yy_X(\mathbf{U}).
\)
The action of \(K_\beta^{N-1}\) on the orbit is by isometries for the induced
metric. If \(\ell_{\mathbf{V}}\) denotes left translation by \(\mathbf{V}\), then
\[
(\ell_{\mathbf{V}})^*\gamma_X
=
(\yy_X\circ \ell_{\mathbf{V}})^*\iota_X
=
(\mathbf{V}\cdot \yy_X)^*\iota_X
=
\yy_X^*(\mathbf{V}\cdot)^*\iota_X
=
\yy_X^*\iota_X
=
\gamma_X.
\]
Thus \(\gamma_X\) is left-invariant.

The reference volume on \(K_\beta^{N-1}\) is the product volume induced by
\(
\langle A,B\rangle_{\mathfrak{k}}
=
\operatorname{Re}\operatorname{Tr}(A^*B)
\)
on each factor \(\mathfrak{k}_\beta\). Therefore
\(
\operatorname{vol}(K_\beta^{N-1})=c_\beta^{N-1}.
\)
When \(\F=\R\), this is the volume of all of \(O_d\). Choose an orthonormal basis
of \(\mathfrak{k}_\beta^{N-1}\) for this product metric and extend it by left
translation. In this frame, the reference metric has matrix \(I\), while
\(\gamma_X\) has the constant matrix \(G_{X,\beta}\). Hence the Riemannian
density of \(\gamma_X\) is \(\det(G_{X,\beta})^{1/2}\) times the reference density.
Integrating over \(K_\beta^{N-1}\) gives \eqref{eq:orbit-volume-from-gram}.
\end{proof}

\begin{proof}[Proof of \Cref{thm:intro-entropy}]
By the endpoint isometry, the entropy depends only on the singular values of
\(X\). Thus
\(
S^\beta(X)=S^\beta(\Sigma).
\)
By \Cref{prop:orbit-volume-pullback} and \Cref{cor:total-gram-determinant},
\[
\operatorname{vol}(\OO_\Sigma^\beta)
=
c_\beta^{N-1}
N^{(\beta-1)d/2}
(\det\Sigma)^{(\beta-1)(1-1/N)}
\left(
\frac{\van(\Sigma^2)}{\van(\Sigma^{2/N})}
\right)^{\beta/2}.
\]
Taking logarithms gives
\[
S^\beta(X)
=
(N-1)\log c_\beta
+
\frac{\beta}{2}
\log\left(
\frac{\van(\Sigma^2)}{\van(\Sigma^{2/N})}
\right)
+
(\beta-1)
\left(
\frac{d}{2}\log N
+
\log\left(
\frac{\det\Sigma}{\det(\Sigma^{1/N})}
\right)
\right).
\]
Since
\[
\log\left(
\frac{\det\Sigma}{\det(\Sigma^{1/N})}
\right)
=
\left(1-\frac{1}{N}\right)
\sum_{s=1}^d \log\sigma_s,
\]
this is equivalent to the sum formula.
\end{proof}

\subsection*{A determinant identity}

\begin{lemma}\label{lem:det-real-AN-inverse}
Let \(X=U_N\Sigma U_0^*\) be a singular value decomposition over \(\F\). Assume
\(\sigma_1>\cdots>\sigma_d>0\). Then
\begin{equation}\label{eq:det-real-AN-inverse}
\det_{\R}(\mathcal A_{N,X}^{-1})
=
N^{-\beta d}
(\det\Sigma)^{-2\beta(1-1/N)}
\left(
\frac{\van(\Sigma^{2/N})}{\van(\Sigma^2)}
\right)^{2\beta}.
\end{equation}
\end{lemma}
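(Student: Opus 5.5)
Since both sides of \eqref{eq:det-real-AN-inverse} depend only on the singular values, the plan is to reduce to \(X=\Sigma\) and then diagonalize \(\mathcal A_{N,\Sigma}\) explicitly in a real basis of \(\Md{d}(\F)\).

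\emph{Reduction.} For \(X=U_N\Sigma U_0^*\), functional calculus gives \((XX^*)^{a}=U_N\Sigma^{2a}U_N^*\) and \((X^*X)^{a}=U_0\Sigma^{2a}U_0^*\). Hence
\[
\mathcal A_{N,X}(Z)=U_N\,\mathcal A_{N,\Sigma}(U_N^*ZU_0)\,U_0^*.
\]
The map \(Z\mapsto U_N^*ZU_0\) is a real-linear isometry of \(\Md{d}(\F)\) for \(\re\tr\), valid also over \(\HH\) by cyclicity of \(\re\tr\). So \(\mathcal A_{N,X}\) and \(\mathcal A_{N,\Sigma}\) are conjugate, and \(\det_{\R}\) is unchanged.

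\emph{Eigenvalues at \(X=\Sigma\).} Here
\[
\mathcal A_{N,\Sigma}(Z)=\sum_{p=1}^N \Lambda^{2(N-p)}Z\Lambda^{2(p-1)},\qquad \Lambda=\Sigma^{1/N}.
\]
Because \(\Lambda\) is real diagonal, it commutes with scalars in \(\F\). Therefore each real basis element \(\eps E_{sl}\), with \(\eps\in B_\beta\) and \(1\le s,l\le d\), is an eigenvector, with eigenvalue
\[
\mu_{sl}=\sum_{p=1}^N\lambda_s^{2(N-p)}\lambda_l^{2(p-1)}.
\]
These \(\beta d^2\) vectors form a real basis of \(\Md{d}(\F)\), so \(\det_{\R}\mathcal A_{N,\Sigma}=\prod_{s,l}\mu_{sl}^{\beta}\).

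\emph{Evaluating the product.}
\begin{itemize}
\item Diagonal pairs \(s=l\): \(\mu_{ss}=N\lambda_s^{2(N-1)}=N\sigma_s^{2-2/N}\). Over all \(s\) with multiplicity \(\beta\), these contribute \(N^{\beta d}(\det\Sigma)^{2\beta(1-1/N)}\).
\item Off-diagonal pairs \(s\ne l\): \(\mu_{sl}=\mu_{ls}\), and by the geometric sum in \Cref{prop:det-offdiag-block}, \(\mu_{sl}=(\sigma_s^2-\sigma_l^2)/(\sigma_s^{2/N}-\sigma_l^{2/N})\). Each unordered pair \(s<l\) therefore appears twice with multiplicity \(\beta\), contributing \(\bigl(\van(\Sigma^2)/\van(\Sigma^{2/N})\bigr)^{2\beta}\).
\end{itemize}
Inverting the product gives \eqref{eq:det-real-AN-inverse}.

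The only step needing care is the quaternionic eigenvector claim. One must check that left multiplication by the real diagonal \(\Lambda^{2(N-p)}\) and right multiplication by \(\Lambda^{2(p-1)}\) act on \(\eps E_{sl}\) only through the real scalars \(\lambda_s,\lambda_l\). This holds because real scalars are central in \(\HH\). It is also what makes the multiplicity exactly \(\beta\) per ordered pair \((s,l)\).
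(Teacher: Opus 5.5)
Your proof is correct and is essentially the paper's argument. The paper diagonalizes \(\mathcal A_{N,X}\) directly in the real orthonormal basis \(u_s\varepsilon v_l^*\) of \(\Md{d}(\F)\). That basis is exactly the image of your basis \(\varepsilon E_{sl}\) under the isometry \(Z\mapsto U_N Z U_0^*\), and the paper then evaluates the same product of eigenvalues \(\alpha_{sl}=\mu_{sl}\), each with real multiplicity \(\beta\).
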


\begin{proof}
Let \(u_s\) and \(v_l\) be the columns of \(U_N\) and \(U_0\). The vectors
\[
u_s\varepsilon v_l^*,
\qquad
1\le s,l\le d,\quad \varepsilon\in B_\beta,
\]
form a real orthonormal basis of \(M_d(\F)\). Indeed,
\[
\left\langle
u_s\varepsilon v_l^*,
u_m\eta v_n^*
\right\rangle
=
\delta_{sm}\delta_{ln}\operatorname{Re}(\overline{\varepsilon}\eta)
=
\delta_{sm}\delta_{ln}\delta_{\varepsilon\eta}.
\]
For this basis,
\[
\mathcal A_{N,X}(u_s\varepsilon v_l^*)
=
\alpha_{sl}u_s\varepsilon v_l^*,
\]
where
\[
\alpha_{sl}
:=
\sum_{p=1}^N
\sigma_s^{2(N-p)/N}
\sigma_l^{2(p-1)/N}.
\]
The eigenvalue does not depend on \(\varepsilon\), so each \(\alpha_{sl}\) has real
multiplicity \(\beta\). For \(s=l\),
\(
\alpha_{ss}=N\sigma_s^{2-2/N}.
\)
For \(s\ne l\), the identity for a geometric series gives
\[
\alpha_{sl}
=
\frac{\sigma_s^2-\sigma_l^2}
     {\sigma_s^{2/N}-\sigma_l^{2/N}}.
\]
Hence
\[
\det_{\R}(\mathcal A_{N,X}^{-1})
=
\prod_{s=1}^d \alpha_{ss}^{-\beta}
\prod_{1\le s<l\le d}\alpha_{sl}^{-2\beta}.
\]
Substituting the formulas for \(\alpha_{ss}\) and \(\alpha_{sl}\) gives
\[
\det_{\R}(\mathcal A_{N,X}^{-1})
=
N^{-\beta d}
(\det\Sigma)^{-2\beta(1-1/N)}
\prod_{1\le s<l\le d}
\left(
\frac{\sigma_s^{2/N}-\sigma_l^{2/N}}
     {\sigma_s^2-\sigma_l^2}
\right)^{2\beta},
\]
which is \eqref{eq:det-real-AN-inverse}.
\end{proof}

\begin{proof}[Proof of \Cref{prop:intro-operator}]
By the endpoint isometry, it suffices to use the diagonal representative \(\Sigma\).
Combining \eqref{eq:orbit-volume-from-gram} with
\eqref{eq:total-gram-determinant-vandermonde} gives
\[
\operatorname{vol}(\OO_X^\beta)
=
c_\beta^{N-1}
N^{(\beta-1)d/2}
(\det\Sigma)^{(\beta-1)(1-1/N)}
\left(
\frac{\van(\Sigma^2)}{\van(\Sigma^{2/N})}
\right)^{\beta/2}.
\]
Taking the fourth root of \eqref{eq:det-real-AN-inverse} gives
\[
\det_{\R}(\mathcal A_{N,X}^{-1})^{1/4}
=
N^{-\beta d/4}
(\det\Sigma)^{-\beta(1-1/N)/2}
\left(
\frac{\van(\Sigma^{2/N})}{\van(\Sigma^2)}
\right)^{\beta/2}.
\]
The Vandermonde factors cancel. The powers of \(N\) combine as
\[
\frac{(\beta-1)d}{2}-\frac{\beta d}{4}
=
\frac{(\beta-2)d}{4},
\]
and the powers of \(\det\Sigma\) combine as
\[
(\beta-1)\left(1-\frac{1}{N}\right)
-
\frac{\beta}{2}\left(1-\frac{1}{N}\right)
=
\frac{\beta-2}{2}\left(1-\frac{1}{N}\right).
\]
Since
\[
(\det\Sigma)^{1-1/N}
=
\frac{\det\Sigma}{\det(\Sigma^{1/N})},
\]
we obtain
\[
\operatorname{vol}(\OO_X^\beta)
\det_{\R}(\mathcal A_{N,X}^{-1})^{1/4}
=
c_\beta^{N-1}
N^{(\beta-2)d/4}
\left(
\frac{\det\Sigma}{\det(\Sigma^{1/N})}
\right)^{(\beta-2)/2}.
\]
Taking logarithms gives the equivalent entropy identity, because \(\mathcal A_{N,X}\) is
positive.
\end{proof}

\section*{Acknowledgements}

The authors are grateful to Govind Menon and Tianmin Yu for helpful discussions and insightful remarks that have improved this work. This work was partially supported by NSF grant DMS 2407055.

\printbibliography

\end{document}